\documentclass[accepted]{uai2022} 

\usepackage[american]{babel}

\usepackage{natbib} 
    \bibliographystyle{plainnat}
    
\usepackage{mathtools} 
\usepackage{booktabs} 
\usepackage{tikz} 


\usepackage{microtype}
\usepackage{graphicx}
\usepackage{subfigure}
\usepackage{booktabs} 
\newsavebox{\imagebox}
\usepackage{setspace}
\usepackage{outlines}
\usepackage{algorithm}
\usepackage{algorithmic}

\usepackage{thmtools} 
\usepackage{thm-restate}

\usepackage{xcolor, colortbl}
\definecolor{tableHeader}{RGB}{55,126,184}
\definecolor{tableLineOne}{RGB}{245, 245, 245}
\definecolor{tableLineTwo}{RGB}{255, 255, 255}
\definecolor{specialgrey}{RGB}{90, 90, 90}

\definecolor{darkred}{rgb}{0.7,0,0}

\newcount\Comments
\Comments=1
\newcommand{\kibitz}[2]{\ifnum\Comments=1{\textcolor{#1}{\textsf{\footnotesize #2}}}\fi}

\newcommand{\gitpublic}{\href{https://github.com/deepmind/neural_testbed}{\texttt{github.com/deepmind/neural\_testbed}}}

\newcommand{\github}{\href{https://github.com/deepmind/neural_testbed}{\texttt{github.com/deepmind/neural\_testbed}}}

\usepackage{dsfont}
\usepackage{bbm}
\usepackage{amsmath}
\usepackage{amssymb}
\usepackage{enumitem}
\usepackage{amsthm}

\usepackage{thmtools}
\usepackage{thm-restate}

\usepackage[capitalize,noabbrev]{cleveref}
\usepackage{url}

\newtheorem{definition} {Definition}

\newtheorem{example} {Example}


\newcommand{\Exp}{\mathds{E}}

\newcommand{\Prob}{\mathds{P}}

\newcommand{\Nat}{\mathbb{N}}

\newcommand{\Ec}{\mathcal{E}}

\newcommand{\E}{\mathbb{E}}

\newcommand{\KL}{\mathbf{d}_{\mathrm{KL}}}
\newcommand{\KLTILDE}{\tilde{\mathbf{d}}_{\mathrm{KL}}}
\newcommand{\KLBAR}{\overline{\mathbf{d}}_{\mathrm{KL}}}
\newcommand{\data}{\mathcal{D}}

\newcommand{\environment}{\mathcal{E}}

\def\I{\mathbb{I}}

\definecolor{ian_highlight}{RGB}{100, 2, 2}

\newbox\statebox
\newcommand{\myState}[1]{%
    \setbox\statebox=\vbox{#1}%
    \edef\thealgruleheight{\dimexpr \the\ht\statebox+1pt\relax}%
    \edef\thealgruledepth{\dimexpr \the\dp\statebox+1pt\relax}%
    \ifdim\thealgruleheight<.75\baselineskip
        \def\thealgruleheight{\dimexpr .75\baselineskip+1pt\relax}%
    \fi
    \ifdim\thealgruledepth<.25\baselineskip
        \def\thealgruledepth{\dimexpr .25\baselineskip+1pt\relax}%
    \fi
    \State #1%
    \def\thealgruleheight{\dimexpr .75\baselineskip+1pt\relax}%
    \def\thealgruledepth{\dimexpr .25\baselineskip+1pt\relax}%
}



\title{Evaluating High-Order Predictive Distributions in Deep Learning}

%
%
\author[1]{\href{mailto:<iosband@deepmind.com>?Subject=Your UAI 2022 paper}{Ian Osband}}
\author[1]{Zheng Wen}
\author[1]{Seyed Mohammad Asghari}
\author[1]{Vikranth Dwaracherla}
\author[1]{Xiuyuan Lu}
\author[1]{\mbox{Benjamin Van Roy}}
\affil[1]{%
    DeepMind
}

\begin{document}
\maketitle

\begin{abstract}
\vspace{-3mm}
Most work on supervised learning research has focused on marginal predictions.
In decision problems, joint predictive distributions are essential for good performance.
Previous work has developed methods for assessing low-order predictive distributions with inputs sampled i.i.d. from the testing distribution.
With low-dimensional inputs, these methods distinguish agents that effectively estimate uncertainty from those that do not.
We establish that the predictive distribution order required for such differentiation increases greatly with input dimension, rendering these methods impractical.
To accommodate high-dimensional inputs, we introduce \textit{dyadic sampling}, which focuses on predictive distributions associated with random \textit{pairs} of inputs.
We demonstrate that this approach efficiently distinguishes agents in high-dimensional examples involving simple logistic regression as well as complex synthetic and empirical data.
\end{abstract}

\vspace{-2mm}
\section{Introduction}
\label{sec:intro}
\vspace{-2mm}

{
\medmuskip=2mu
\thinmuskip=1mu
\thickmuskip=2mu
We consider learning agents that are trained on data pairs $((X_t,Y_{t+1}): t=0,1,\ldots,T-1)$. 
At a new input $X_T$, such an agent can generate a predictive distribution of the outcome $Y_{T+1}$ that is yet to be observed.
This distribution characterizes the agent's uncertainty about $Y_{T+1}$.
We refer to such a prediction as {\it marginal} to distinguish it from a \textit{joint} predictive distribution over a sequence of prospective outcomes $(Y_{T+1},\ldots,Y_{T+\tau})$ with inputs $(X_T,\ldots,X_{T+\tau-1})$.

Predictive distributions express uncertainty about future observations.
The importance of such uncertainty estimation has motivated a great deal of research over recent years, much of which in the Bayesian deep learning community \citep{neal2012bayesian}.
With the proliferation of agents that generate predictive distributions, it is increasingly important to systematically study and improve their performance.
}

Recent theoretical work has highlighted the importance of joint predictive distributions in driving effective decisions \citep{wen2022predictions}.
This theory is supported by experiments that assess and compare agents using synthetic data generated by a random neural network and 2D inputs \citep{osband2022neural}.
That work evaluates the quality of joint predictive distributions over ten inputs sampled i.i.d. from the training distribution.
The results clearly distinguish agents that effectively estimate uncertainty from those that do not.
This evaluation predicts agent performance when used to guide decisions in high-dimensional `neural bandits'.

However, as the input dimension increases, the aforementioned approach to evaluating agents becomes uninformative.
As we will later discuss, the reason lies in the order of the predictive distributions being evaluated.
With a two-dimensional input, the tenth order distribution suffices, but the predictive distribution order required to produce meaningful assessments increases rapidly with the input dimension.
We could consider scaling the predictive distribution order as needed, but the evaluation algorithms of \citet{osband2022neural} become computationally intractable.

To accommodate high-dimensional inputs, we introduce \textit{dyadic sampling}, which focuses on predictive distributions associated with random \textit{pairs} of inputs rather than those scattered according to the training input distribution.
We demonstrate that this approach efficiently distinguishes agents in high-dimensional examples involving simple logistic regression as well as complex synthetic and empirical data.
For example, agent assessments based on dyadic sampling are predictive of performance in high-dimensional neural bandits presented in results of \citet{osband2022neural}.


\vspace{-2mm}
\subsection{Related work}
\label{sec:related_work}
\vspace{-2mm}

We are motivated by the importance of joint predictions in driving effective decisions \citep{wen2022predictions}.
Empirical analysis of joint predictions for deep learning in 2D supports this theory \citep{osband2022neural}.
We provide a practical heuristic to scale these insights to high dimensions.

Our research is closely related to topics in Bayesian deep learning \citep{mackay1992practical, wilson2020bayesian}, and robustness \citep{hendrycks2019benchmarking}.
For the most part, these communities have focused on the problem of marginal prediction \citep{nado2021uncertainty, wilson2021evaluating}.
Recent work has also highlighted a notion of cross-correlation in regression and related decision problems \citep{wang2021beyond}.
Our paper provides a related perspective that scales to classification and high dimensional data.

\vspace{-3mm}
\subsection{Key contributions}
\label{sec:contributions}
\vspace{-2mm}

We propose \textit{dyadic sampling}, which evaluates high-order joint predictions at random pairs of inputs.
Section~\ref{sec:theory} motivates the approach, and shows that it can mitigate some challenges in evaluating high-order predictive distributions.

Section~\ref{sec:logistic_regression} shows that dyadic sampling provides useful assessments in logistic regression.
As input dimension scales, i.i.d. sampling from the training distribution does not offer a feasible approach.
Dyadic sampling  offers a viable path where the evaluation of \citet{osband2022neural} is inadequate.

Section~\ref{sec:neural_testbed} extends these insights to \textit{The Neural Testbed} -- an opensource package for the evaluation of joint predictions in deep learning.
As in logistic regression, the neural network generative process is not amenable to evaluation via i.i.d. sampling when the input dimension exceeds three.
In contrast, dyadic sampling scales gracefully as the input dimension grows large.
As part of this project, we submit all agent and evaluation code to \github.

Section~\ref{sec:real_data} shows that our methodology can extend beyond synthetic data.
Dyadic sampling can feasibly evaluate joint predictions on high-dimensional real datasets.
We evaluate benchmark approaches to Bayesian deep learning and show that the insights from the Testbed carry over to real data.
We see that, after tuning, all agents perform similarly in terms of marginal predictions.
However, there are significant differences in the quality of \textit{joint} predictions per agent, evaluated via dyadic sampling.
Further, Testbed performance is highly predictive of performance on empirical data.


\vspace{-1mm}
\section{Evaluating predictives}
\label{sec:theory}
\vspace{-2mm}

This section introduces notation for the standard supervised learning framework we will consider as well as our evaluation metric: KL-loss.
We show that estimating KL divergence in high dimensional distributions can be challenging, and present dyadic sampling as an effective heuristic.

\subsection{Environment and predictions}
\label{sec:theory_environment}
\vspace{-2mm}

Consider a sequence of pairs $((X_t, Y_{t+1}): t =0,1,2,\ldots)$, where each $X_t$ is a feature vector and each $Y_{t+1}$ is its target label.  Each target label $Y_{t+1}$ is produced by an {\it environment} $\environment$, which we formally take to be a conditional distribution $\environment(\cdot|X_t)$.  The environment $\environment$ is a random variable; this reflects the agent's uncertainty about how labels are generated.  Note that $\Prob(Y_{t+1} \in \cdot | \environment, X_t) = \environment(\cdot|X_t)$ and $\Prob(Y_{t+1} \in \cdot | X_t) = \E[\environment(\cdot|X_t) | X_t]$.

We consider an agent that learns about the environment from training data \mbox{$\data_T \equiv ((X_t, Y_{t+1}): t =0,1,\ldots, T-1)$}.
After training, the agent predicts testing class labels $Y_{T+1:T+\tau} \equiv (Y_{T+1}, \dots, Y_{T+\tau})$ from unlabeled feature vectors \mbox{$X_{T:T+\tau-1} \equiv (X_T, \dots, X_{T+\tau-1})$}.



We describe the agent's predictions in terms of a generative model, parameterized by a vector $\theta_T$ that the agent learns from the training data $\data_T$. Specifically, $\theta_T$ parameterizes a distribution $\Prob(\hat{\environment} \in \cdot | \theta_T)$ over imagined environment $\hat{\environment}$, which is also a conditional distribution. For any inputs $X_{T:T+\tau-1}$, to generate the imagined labels $\hat{Y}_{T+1:T+\tau}$, the agent first samples an imagined environment $\hat{\environment}$ from $\Prob(\hat{\environment} \in \cdot | \theta_T)$, then generates $\hat{Y}_{t+1} \sim \hat{\environment}(\cdot | X_t) $ conditionally i.i.d. for each $t=T,\ldots, T+\tau-1$.

The agents $\tau^{\rm th}$-order predictive distribution is given by
$$\hat{P}_{T+1:T+\tau} \equiv \Prob(\hat{Y}_{T+1:T+\tau} \in \cdot | \theta_T, X_{T:T+\tau-1}),$$
which represents an approximation to what would be obtained by conditioning on the environment:
$$P^*_{T+1:T+\tau} \equiv \Prob \left(Y_{T+1:T+\tau} \in \cdot \middle | \environment , X_{T:T+\tau-1} \right).$$
If $\tau=1$, this represents a marginal prediction of a single label for a single feature vector.
For $\tau>1$, this is a joint prediction over $\tau$ labels for $\tau$ different feature vectors.

\subsection{Evaluating joint predictions}
\label{sec:theory_joint}

A learning agent can be assessed through the quality of its predictive distribution $\hat{P}_{T+1:T+\tau}$.
A canonical approach is to evaluate the KL-divergence \citep{wen2022predictions},
\begin{eqnarray}
\label{eq:kl_delta}
\Delta_\tau &\equiv& \KL \big( P^*_{T+1:T+\tau} \big \| \hat{P}_{T+1:T+\tau} \big) \\
\label{eq:kl_tau}
\KL^\tau &\equiv& \E \big[ \Delta_\tau \big].
\end{eqnarray}
Recall that the expectation represents an integral over all random variables.
The minimum of $\KL^\tau$ over all agents that depend on the environment only through $\data_T$ is attained by the posterior agent, whose predictive distribution is
%
\begin{equation}
\label{eq:posterior}
\overline{P}_{T+1:T+\tau} \equiv \Prob(Y_{T+1:T+\tau} \in \cdot | \data_T, X_{T:T+\tau-1}).
\end{equation}
Let $\KLBAR^\tau$ denote the minimum achievable KL-divergence.

Algorithm~\ref{alg:kl-computation} provides a simple Monte-Carlo approach to evaluate $\KL^\tau$.
As the order of the predictive distribution $\tau$ grows, this provides a more nuanced evaluation of agent beliefs than just marginals.
However, even for simple problems, the magnitude of $\tau$ required to provide additional insight beyond marginals can become intractably large.
To anchor our thinking on this matter we consider a simple coin tossing example.

{\small
\begin{algorithm}[tb]
\caption{KL-Loss Estimation \citep{osband2022neural}.}
\label{alg:kl-computation}
\begin{algorithmic}
\FOR{$j=1,2,\ldots,J$}
\STATE sample environment and training data
\STATE train agent on training data
\FOR{$n=1,2,\ldots,N$}
\STATE sample $\tau$ test data pairs 
\STATE compute environment likelihood $p_{j,n}$
\STATE compute agent likelihood $\hat{p}_{j,n}$
\ENDFOR
\ENDFOR
\RETURN $\frac{1}{JN} \sum_{j=1}^J \sum_{n=1}^N \log \left(p_{j,n} / \hat{p}_{j,n} \right)$
\end{algorithmic}
\end{algorithm}
}

\vspace{-2mm}
\begin{example}[Bag of coins]
\label{ex:beta_coins}
\vspace{-2mm}
Let each $X_t$ be a sample from coins $\{1,\ldots,M\}$.  Let the probability of heads $p_x \sim {\rm Unif}(0,1)$ i.i.d. for each coin $x$.
Each observation $Y_{t+1}$ is the outcome from tossing coin $X_t$, so that $\Ec(1 | X_t) = p_{X_t}$.
\vspace{-2mm}
\end{example}



{
\medmuskip=2mu
\thinmuskip=1mu
\thickmuskip=2mu
Let us consider a predictive distribution for which $\Prob(\hat{Y}_{1:\tau} |X_{0:\tau-1}) = \prod_{t=0}^{\tau-1} \Prob(\hat{Y}_{t+1} |X_t) = 1/2^\tau$.
Suppose an agent uses this to select coins sequentially with the aim of maximizing the expected number of heads.
While $\hat{Y}_{1:\tau}$ accurately minimizes $\KL^1$, the agent assumes that toss outcomes are independent and therefore does not learn from history to improve successive choices.
Accounting for dependencies arising in the joint distribution, as would be captured by $\KL^\tau$, is essential to maximizing performance.
}

\begin{restatable}[Small $\tau$ approximately marginal]{proposition}{smalltau}
\label{prop:small_tau}
If the agent defined above is applied to Example~\ref{ex:beta_coins} with $\tau \ll M$,
\vspace{-1mm}
\begin{equation}
\KL^\tau = \KLBAR^\tau + O\left( \tau^3 /M  \right). \nonumber
\end{equation}
\end{restatable}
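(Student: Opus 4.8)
The plan is to show that the naive agent's $\tau$-th order prediction is already Bayes-optimal whenever the $\tau$ sampled test coins are pairwise distinct, so that its excess KL-loss over the posterior agent is carried entirely by the rare collision event, whose probability obeys the classical birthday bound $O(\tau^2/M)$. To set up, note that the prediction $1/2^\tau$ ignores training data and the test inputs are indexed $X_{0:\tau-1}$, so we may take $\data_T=\emptyset$; then the posterior predictive \eqref{eq:posterior} is $\overline{P}_{1:\tau}=\Prob(Y_{1:\tau}\in\cdot\mid X_{0:\tau-1})$. Conditioning on $\environment$ makes the labels independent with $Y_{t+1}\sim\mathrm{Bernoulli}(p_{X_t})$, so $P^*_{1:\tau}=\prod_{t=0}^{\tau-1}\mathrm{Bernoulli}(p_{X_t})$, while the agent uses $\hat{P}_{1:\tau}=\prod_{t=0}^{\tau-1}\mathrm{Bernoulli}(1/2)$, i.e.\ the uniform law on $\{0,1\}^\tau$. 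Let $E$ be the event that $X_0,\dots,X_{\tau-1}$ are pairwise distinct. On $E$ the probabilities $p_{X_0},\dots,p_{X_{\tau-1}}$ are i.i.d.\ $\mathrm{Unif}(0,1)$, so marginalizing them out gives $\overline{P}_{1:\tau}=\prod_t\mathrm{Bernoulli}(\E[p_{X_t}])=\prod_t\mathrm{Bernoulli}(1/2)=\hat{P}_{1:\tau}$; hence $\Delta_\tau$ for the naive agent and for the posterior agent coincide on $E$.

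Next I would use this coincidence to write $\KL^\tau-\KLBAR^\tau=\E\big[\big(\KL(P^*_{1:\tau}\|\hat{P}_{1:\tau})-\KL(P^*_{1:\tau}\|\overline{P}_{1:\tau})\big)\Ind(E^c)\big]$, which is legitimate because the bracketed difference vanishes on $E$. Two elementary inequalities then close the estimate: $\KL(P^*_{1:\tau}\|\overline{P}_{1:\tau})\ge 0$, and — since $\hat{P}_{1:\tau}$ is uniform over $2^\tau$ outcomes — $\KL(P^*_{1:\tau}\|\hat{P}_{1:\tau})=\tau\log 2-H(P^*_{1:\tau})\le\tau\log 2$. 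Together they give $\KL^\tau-\KLBAR^\tau\le\tau\log 2\cdot\Prob(E^c)$, and a union bound over the $\binom{\tau}{2}$ input pairs, each coinciding with probability $1/M$ under uniform sampling of coins, yields $\Prob(E^c)\le\tau^2/(2M)$. Hence $\KL^\tau-\KLBAR^\tau\le(\log 2)\tau^3/(2M)$; combined with the reverse inequality $\KL^\tau\ge\KLBAR^\tau$ (optimality of the posterior agent, recorded in Section~\ref{sec:theory_joint}) this gives $\KL^\tau=\KLBAR^\tau+O(\tau^3/M)$.

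The one place to be careful is the choice of which KL term to discard on $E^c$: one should not try to bound $\KL(P^*_{1:\tau}\|\overline{P}_{1:\tau})$ there, because when some coin is sampled many times $\overline{P}_{1:\tau}$ can assign exponentially small probability to certain label patterns and this divergence is not uniformly bounded in $M$; the clean route is to drop it using nonnegativity and retain only the uniform bound $\tau\log 2$ on $\KL(P^*_{1:\tau}\|\hat{P}_{1:\tau})$. Minor points worth stating explicitly are that $X_t$ is drawn uniformly over $\{1,\dots,M\}$ (so the per-pair collision probability is exactly $1/M$ in the birthday bound), and that the hypothesis $\tau\ll M$ is what drives this error term to zero.
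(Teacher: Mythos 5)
Your proposal is correct and follows essentially the same route as the paper's proof: identify the no-collision event on which the naive agent coincides with the posterior agent, bound the excess KL on the complementary event by $\tau\log 2$ (using nonnegativity of the posterior agent's KL and the uniform bound on the agent's), and control the collision probability by a birthday-type estimate of order $\tau^2/M$. The only cosmetic difference is that you use a union bound for $\Prob(E^c)$ where the paper expands the exact product $\prod_{k=1}^{\tau-1}(1-k/M)$; both yield $O(\tau^2/M)$ and hence the stated $O(\tau^3/M)$ gap.
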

\begin{proof}
\vspace{-3mm}
Note that under the event that there are no repeated inputs in $X_{0:\tau-1}$, the posterior agent is equivalent to the agent defined above. For $\tau \ll M$, this event occurs with high probability. The detailed proof is in Appendix~\ref{app:proof_small_tau}.
\end{proof}
\vspace{-2mm}

Proposition~\ref{prop:small_tau} shows that if $\tau \ll M$, then $\KL^\tau$ is unable to distinguish agents that only match marginals from those that are useful for decision making.
When the cardinality of the input space $M$ is much larger than the test distribution order $\tau$ it is unlikely that any correlated inputs will be sampled.
The metric $\KL^\tau$ punishes agents that impose an erroneous correlation, but is unlikely to reward agents that correctly capture this dependence until $\tau$ is sufficiently large.

In Example \ref{ex:beta_coins}, it may suffice to use a value of $\tau$ that grows cubicly in $M$.
However, due to the curse of dimensionality,
the required magnitude of $\tau$ can grow exponentially in problem dimension.
To handle such cases,  we introduce a practical evaluation metric that correctly identifies high quality predictive distributions with modest values of $\tau$.

For this result, we introduce notation for assignment: for random variables $A, B, C$ and a function $f(c) \equiv \E[A|B = c]$, let $\E[A|B\leftarrow C] = f(C)$.  Note that, in general, if $C$ is a random variable then $\E[A|B\leftarrow C] \neq \E[A|B = C]$.

\begin{definition}[Polyadic test sampling (of order $\kappa$)]
\label{def:polyadic}
For any $\kappa \in \Nat$, let `anchor points' $\overline{X}_{1:\kappa}$ be drawn i.i.d. from $\Prob(X_t \in \cdot)$, and let $\tilde{X}^\kappa_{T:T+\tau-1} \sim {\rm Unif}\{\overline{X}_{1:\kappa}\}$.
We define,
\begin{eqnarray}
\label{eq:polyadic}
\KL^{\tau, \kappa} &\equiv& \Exp\left[ \Exp\left[ \Delta_\tau  \mid X_{T:T+\tau-1} \leftarrow \tilde{X}^\kappa_{T:T+\tau-1} \right] \right].
\end{eqnarray}
\end{definition}


Polyadic sampling is motivated by a desire to investigate an agent's predictions in situations where correlation between predictions is more likely to play an important role.
Under reasonable regularity assumptions $\lim_{\kappa \rightarrow \infty} \KL^{\tau, \kappa} = \KL^\tau$ for all $\tau$.
In the case of $\kappa < \tau$, we can ensure that at least one input will be sampled multiple times.
In the special case of $\kappa=1$, we call this \textit{monadic} sampling.

\begin{restatable}[Monadic sampling cannot spot bad agents]{proposition}{monadic}
\label{prop:monadic}
Consider an agent that ignores the inputs and predicts
$$\Prob(\hat{Y}_{1:\tau} | X_{0:\tau-1}) = \Prob(\hat{Y}_{1:\tau}),$$
where $\hat{Y}_{1}, \dots, \hat{Y}_{\tau}$ are sampled independently from ${\rm Ber}(\hat{p})$ with a shared parameter $\hat{p} \sim {\rm Unif}(0, 1)$.
Then, for any $\tau \in \Nat$ in Example~\ref{ex:beta_coins} this agent achieves the minimum $\KL^{\tau,1}$ over all agents.
\end{restatable}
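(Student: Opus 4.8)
The plan is to reduce monadic sampling to a one-coin problem and then show the stated agent already coincides with the Bayes-optimal in-class predictor there. First I would unravel Definition~\ref{def:polyadic} at $\kappa=1$: a single anchor coin $\overline{X}\equiv\overline{X}_1$ is drawn from the input marginal $\Prob(X_t\in\cdot)$, and the assignment $X_{T:T+\tau-1}\leftarrow(\overline{X},\dots,\overline{X})$ collapses all $\tau$ test inputs onto that one coin. Thus $\KL^{\tau,1}$ averages $\Delta_\tau$ over $\environment$, $\data_T$ and the fresh coin $\overline{X}$, with every agent queried at the degenerate configuration $(\overline{X},\dots,\overline{X})$; at this configuration the test labels are, given $\environment$, just $\tau$ conditionally i.i.d.\ tosses of coin $\overline{X}$, so $P^*_{T+1:T+\tau}=\mathrm{Ber}(p_{\overline{X}})^{\otimes\tau}$.

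Next I would identify the smallest possible value of $\KL^{\tau,1}$ in the class for which $\KLBAR$ is defined (agents depending on $\environment$ only through $\data_T$); note this restriction is needed, since otherwise an agent reading off $p_{\overline X}$ would be optimal. Applying the optimality of the posterior agent (stated above) conditionally on the input configuration, the minimizer of $\Exp[\Delta_\tau \mid X_{T:T+\tau-1}=c]$ over this class is the posterior predictive $\overline{P}_c\equiv\Prob(Y_{T+1:T+\tau}\in\cdot \mid \data_T,\, X_{T:T+\tau-1}=c)$. Equivalently, the standard log-loss ``Pythagorean'' identity gives $\KL^{\tau,1}(\hat P)=\KLBAR^{\tau,1}+\Exp[\KL(\overline{P}\| \hat P)]$, with $\overline{P}$ and the agent's predictive $\hat P$ both evaluated at the monadic configuration. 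So it suffices to prove that the agent in the statement predicts exactly $\overline{P}$ at every configuration $(\overline{X},\dots,\overline{X})$.

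For the matching step: by de Finetti the agent's joint prediction is $\Prob(\hat Y_{1:\tau}=y_{1:\tau})=\int_0^1 q^{k}(1-q)^{\tau-k}\,dq$ with $k=\sum_i y_i$, irrespective of the inputs. On the other side, at $(\overline{X},\dots,\overline{X})$ the $\tau$ labels are i.i.d.\ tosses of the single coin $\overline{X}$, whose bias $p_{\overline{X}}$ is still uniform on $(0,1)$ after conditioning on $\data_T$; hence $\overline{P}(y_{1:\tau})=\int_0^1 q^{k}(1-q)^{\tau-k}\,dq$ as well. The two agree, so $\hat P=\overline{P}$ and the agent attains $\KLBAR^{\tau,1}$.

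I expect the main obstacle to be justifying that $p_{\overline{X}}\mid\data_T\sim\mathrm{Unif}(0,1)$: a priori the anchor coin could coincide with one of the coins $X_0,\dots,X_{T-1}$ seen during training, and on that event $\data_T$ sharpens the posterior over its bias, so the posterior agent would strictly beat the input-ignoring agent. Handling this cleanly is exactly where the argument must use that the anchor is a genuinely fresh draw (as in the ``no repeated inputs'' reasoning behind Proposition~\ref{prop:small_tau}): one conditions on $\overline{X}\notin\{X_0,\dots,X_{T-1}\}$, observes that this is the only case relevant to the claim, and checks that $\overline{P}$ then equals the de Finetti mixture. The remainder is bookkeeping, and the conceptual takeaway is that this agent --- which never adapts to history and is therefore worthless for sequential decisions --- is nonetheless optimal under $\KL^{\tau,1}$.
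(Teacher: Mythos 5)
Your proposal is correct and takes essentially the same route as the paper, whose entire proof is the observation that at the repeated-input configuration this agent's predictive coincides with the posterior agent's; you simply make explicit the Pythagorean decomposition $\KL^{\tau,1}(\hat P)=\KLBAR^{\tau,1}+\Exp[\KL(\overline P\,\|\,\hat P)]$ and the de Finetti computation showing both predictives equal $\int_0^1 q^{k}(1-q)^{\tau-k}\,dq$. The one obstacle you flag is moot: in Example~\ref{ex:beta_coins} as used here the indices run from $t=0$, so $\data_T=\emptyset$ and the anchor coin's bias is exactly ${\rm Unif}(0,1)$ a posteriori; note that if there \emph{were} training data, conditioning on the anchor avoiding $\{X_0,\dots,X_{T-1}\}$ would not rescue the claim, since on the collision event the posterior agent would strictly dominate and the unconditional $\KL^{\tau,1}$ would then not be minimized by this agent.
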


\begin{proof}
\vspace{-3mm}
This agent is constructed so that for any repeated inputs $X_{0:\tau-1}=\tilde{X}^1_{0:\tau-1}$, this agent's predictive distribution matches that of the posterior agent.
\end{proof}
\vspace{-2mm}


Monadic sampling can examine whether an agent understands the correlation structure at a single input.
However, Proposition~\ref{prop:monadic} shows that it does not punish agents that erroneously ascribe correlation to independent input-output pairs.
This agent achieves the best possible score in $\KL^{\tau, 1}$ but is useless for driving decisions.
In order to weed out these agents it is crucial to sample more than one input point.

\subsection{Dyadic sampling $(\kappa=2)$}
\label{sec:theory_dyadic}


This paper introduces \textit{dyadic} test input sampling $(\kappa=2)$ as a practical heuristic for assessing the quality of joint predictions in high dimensions.
This sampling scheme samples two random anchor points from the input space, and then randomly resamples the $\tau$ inputs from these anchor points.
Even with moderate $\tau=10$, we can be sure that most batches will contain a mix of points that are highly correlated to each other, as well as some others which may be quite different.

Dyadic sampling is a heuristic approach designed to work well in practical problems.
The choice of $\kappa=2$ addresses the extreme shortcomings of $\KL^{\tau, \kappa}$ by Propositions \ref{prop:small_tau} and \ref{prop:monadic} in the settings $\kappa \rightarrow \infty$ and $\kappa=1$ respectively.
However, it is certainly not a perfect substitute for evaluating $\KL^\tau$ with very large $\tau$.
Depending on the setting, it is certainly possible to design agents that fare very well according to $\KL^{\tau, 2}$, but very poorly according to $\KL^\tau$.

One might ask,  `Does some other $1 < \kappa < \infty$ provide a better candidate for practical evaluation of posterior predictives?'.
Could there be an analagous result to Proposition~\ref{prop:monadic} when considering $\kappa=2$, but evaluating posterior predictions at \textit{three} anchor points?
Note that, since $\KL^{\tau, 2}$ already evaluates the quality of the joint predictions at any pair of inputs, then for most problems the distribution over any three inputs will also be estimated well.
In particular,
for any Gaussian process, the first two moments are enough to determine the entire distribution of $\Ec$.
We push details to Appendix~\ref{app:gaussian_process}.

\subsection{Joint predictions and information}
\label{sec:theory_information}








So far, we have motivated dyadic sampling mostly through appeal to Example~\ref{ex:beta_coins}, together with some heuristic arguments.
In this subsection we expand on this intuition through the lens of information theory.

To illustrate this, let's consider the posterior agent, which is optimal for generating predictive distributions. Note that under the posterior agent,
\begin{align}
    \KL^\tau =& \, \I \big (Y_{T+1:T+\tau}; \environment \big | \data_T, X_{T:T+\tau-1} \big) \nonumber \\
    =& \, \textstyle \sum_{t=T}^{T+\tau-1} 
    \I \big (Y_{t+1}; \environment \big | \data_T, \data_{T:t}, X_t \big),
\end{align}
where $\I(\cdot)$ denote the (conditional) mutual information \citep{cover1999elements} and
$\data_{T:t} \equiv (X_{T:t-1}, Y_{T+1:t})$.
Note that the second equality follows from the chain rule of mutual information. On the other hand, 
\begin{align}
    \tau \KL^1 =& \, \tau \I \big (Y_{T+1} ; \environment \big | \data_T, X_{T} \big) \nonumber \\
    =& \, \textstyle \sum_{t=T}^{T+\tau-1} 
    \I \big (Y_{t+1} ; \environment \big | \data_T, X_{t} \big),
\end{align}
where the second equality follows from $X_{T:T+\tau-1}$ are i.i.d.

For $\KL^\tau$ to be significantly different from $\tau \KL^1$, we need for at least one $t$, the dataset $\data_{T:t}$ is informative about the target label $Y_{t+1}$ at $X_t$.
For practical problems with $\tau$ small relative to the input space, the $\data_{T:t}$ is not informative about $Y_{t+1}$.
In such cases, we have $\KL^\tau \approx \tau \KL^1$.
One way to think about dyadic sampling is a heuristic approach to sample $X_{T:T+\tau-1}$ so that $D_{T:t}$ is particularly informative about $Y_{t+1}$ and so evaluate the quality of the posterior approximation.
Depending on the problem settings, other input sampling schemes may also be appropriate to accomplish this goal.


\section{Logistic regression}
\label{sec:logistic_regression}


The results of Section \ref{sec:theory} provide a motivation for dyadic sampling where it can sidestep the curse of dimensionality in higher-order predictive distributions.
In this section, we show that this effect can occur in practical settings, not just obtuse problems cooked up for theory.
In fact, even for the canonical problem of logistic regression, the benefits of dyadic sampling can be significant.


\vspace{-1mm}
\subsection{Problem formulation}
\label{sec:logistic_problem}
\vspace{-1mm}

We consider the familiar problem of $D$-dimensional logistic regression.
Inputs are sampled i.i.d. $X_t \sim N(0,I_D)$ and the environment $\Ec$ is determined by parameter $\phi \sim N(0, I_D)$.
Outputs $Y_{t+1} \in \{0, 1\}$ are then sampled according to
$$\Prob(Y_{t+1} = 1 | \Ec, X_t) = \frac{\exp(\rho \phi^T X_t)}{\exp(\rho \phi^T X_t) + 1}.$$
Here $\rho > 0$ is the temperature controlling signal to noise ratio (SNR).
We set $\rho\hspace{-0.5mm}=\hspace{-0.5mm}0.01$ for a high SNR setting.

In this simple setting, we can compare three agents that make predictions $\hat{Y}_{1:\tau}$ given inputs $X_{0:\tau-1}$.
{
\medmuskip=2mu
\thinmuskip=1mu
\thickmuskip=2mu
\begin{enumerate}[noitemsep, nolistsep, leftmargin=*]
    \item \textbf{\texttt{uniform}}: $\Prob(\hat{Y}_{t+1} = 1 | X_t) = \frac{1}{2}$ for $t=0,1,..$.
    \item \textbf{\texttt{marginal}}: Samples $\lambda \sim N(0, 1)$, and then predicts $\Prob(\hat{Y}_{t+1} = 1 | \lambda, X_t) = \frac{\exp(\rho \lambda \|X_t\|_2)}{ \exp(\rho \lambda \|X_t\|_2) + 1}$ for $t=0,1,..$.
    \item \textbf{\texttt{prior}}: Samples $\hat{\phi} \sim N(0, I_D)$, and then predicts $\Prob(\hat{Y}_{t+1} = 1 | \hat{\phi}, X_t) = \frac{\exp(\rho  \hat\phi^T X_t)}{\exp(\rho \hat\phi^T X_t) + 1}$ for $t=0,1,..$.
\end{enumerate}
}

The agents are chosen to highlight specific properties of the logistic regression problem.
The \texttt{uniform} agent makes the correct marginal predictions at any input, but does not capture any correlation among $Y_{1:\tau}$.
The \texttt{marginal} agent makes the correct marginal predictions, and it also gets the correct joint distribution if inputs $X_{0:\tau-1}$ are all sampled at a \textit{single} point (monadic sampling).
However, it introduces spurious correlation among the predicted outputs if the inputs are not all equal.
The \texttt{prior} agent samples from the true prior, and so is optimal for all $\KL^{\tau, \kappa}$.
We would like to have a practical evaluation metric that can separate this \textit{optimal} agent from these sub-optimal approximations.

We consider metrics $\KL^\tau$ and $\KL^{\tau, \kappa}$ for $\kappa = 1, 2$, all of which are estimated through Monte Carlo sampling according to Algorithm \ref{alg:kl-computation}.
Despite the simplicity of this problem, only dyadic sampling ($\KL^{\tau, \kappa=2}$) can correctly separate the agents once the input dimension grows.

\vspace{-2mm}
\subsection{Results}
\label{sec:logistic_results}
\vspace{-1mm}

\begin{figure}[!ht]
    \centering
    \includegraphics[width=0.95\columnwidth]{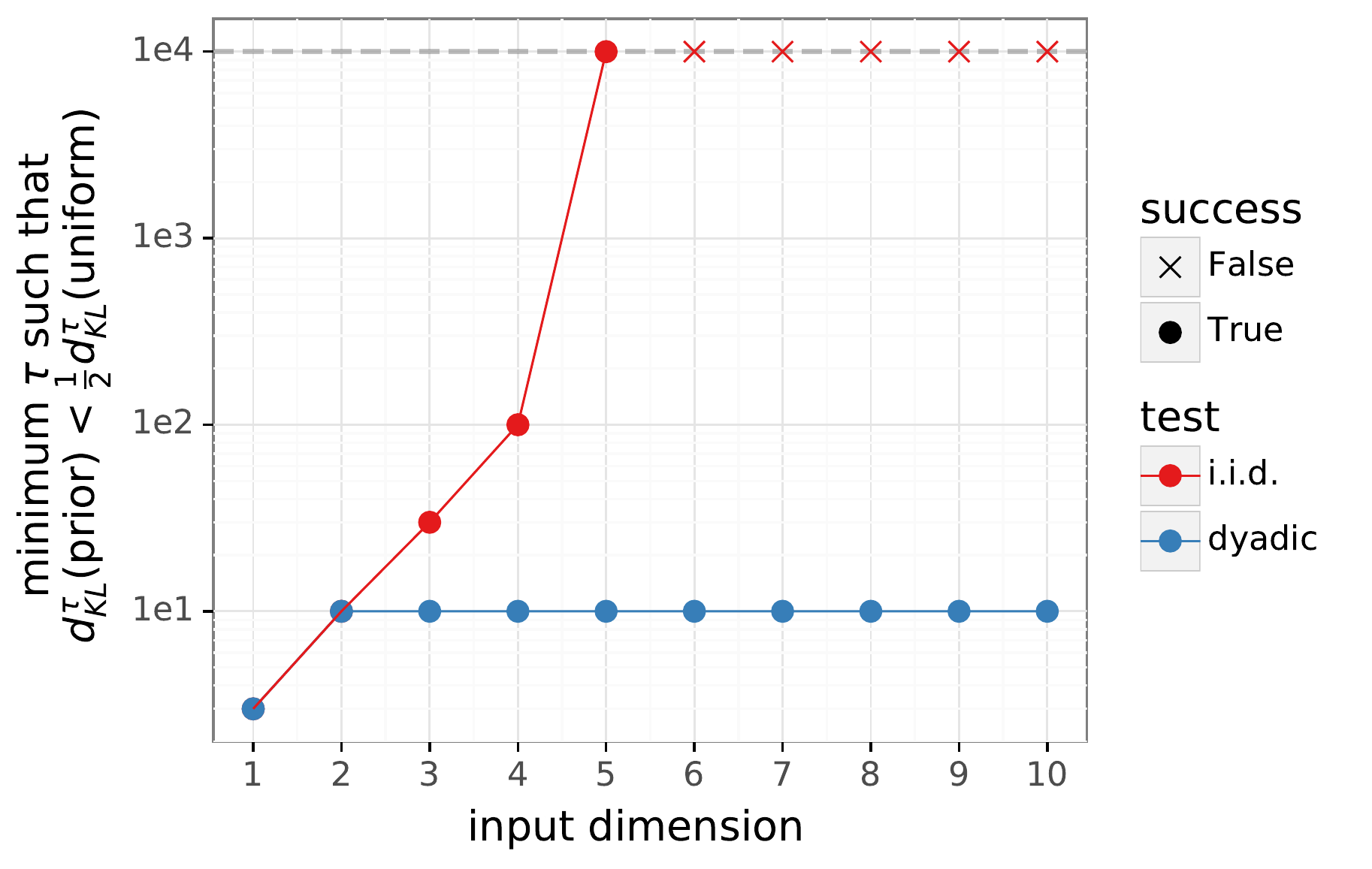}
    \vspace{-3mm}
    \caption{$\KL^\tau$ can separate \texttt{prior} agent from \texttt{uniform}, but the required $\tau$ is intractable in the input dimension. Dyadic sampling $\KL^{\tau, \kappa=2}$ can distinguish these agents with a small value of $\tau$.}
    \label{fig:logistic_tau_scaling}
    \vspace{2mm}
\end{figure}

\begin{figure}[!ht]
    \centering
    \includegraphics[width=0.95\columnwidth]{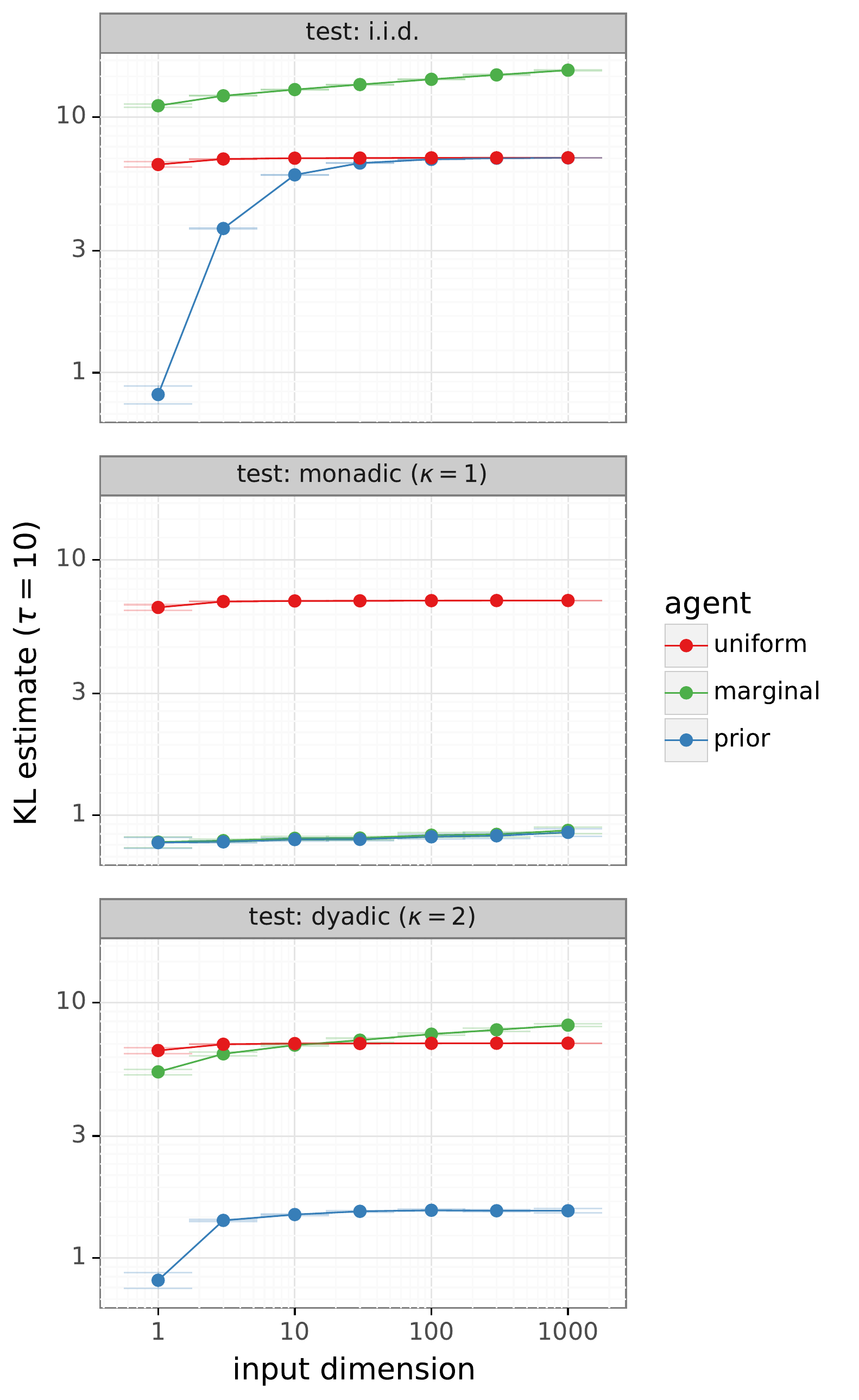}
    \vspace{-3mm}
    \caption{Comparing KL estimates under different input sampling schemes.
    Sampling test inputs i.i.d. cannot distinguish \texttt{uniform} agent from the \texttt{prior} agent in high dimensions.
    Monadic sampling cannot distinguish the \texttt{prior} agent from \texttt{marginal}.
    Dyadic sampling correctly identifies \texttt{prior} agent from \texttt{uniform} and \texttt{marginal}.}
    \label{fig:logistic_agents}
\end{figure}

As the \texttt{prior} agent makes optimal predictions in this problem, in principle, this agent will outperform all others according to $\KL^\tau$ as the order of the predictive distribution $\tau$ grows.
However, to separate the agents, the required $\tau$ can quickly become intractable in the input dimension. 

Figure~\ref{fig:logistic_tau_scaling} shows that, in logistic regression, for dimension $D \ge 5$, even $\tau=10,000$ is insufficient to give a factor of $2$ separation between the optimal \texttt{prior} agent and the uninformed \texttt{uniform} agent.
The computational cost of evaluating $\KL^\tau$ grows with $\tau$, so that this can quickly becomes impractical even for relatively small-scale problems.
By contrast, evaluation with $\KL^{\tau, \kappa=2}$ is able to identify this separation with only $\tau=10$ even as the input dimension grows.

Figure~\ref{fig:logistic_agents} shows that this scaling carries over to high dimensions, fixing $\tau=10$.
Sampling test inputs i.i.d. cannot distinguish the \texttt{uniform} agent from the \texttt{prior} agent in dimensions greater than $100$.
At a high level, this result matches the spirit of Proposition~\ref{prop:small_tau}.
Figure~\ref{fig:logistic_agents} also shows that monadic sampling $\KL^{\tau, \kappa=1}$ cannot distinguish the \texttt{prior} agent from the \texttt{marginal} agent.
This mirrors Proposition~\ref{prop:monadic}, but in a setting with generalization.
Dyadic sampling $\KL^{\tau, \kappa=2}$ correctly identifies that \texttt{prior} agent is a superior agent across all input dimensions.


\begin{table*}[!th]
\caption{Summary of benchmark agents, full details in Appendix \ref{app:testbed_agents}.}
\vspace{-4mm}
\begin{center}
{\footnotesize
\begin{tabular}{|l|l|l|}
\hline
\rowcolor{tableHeader}
\textcolor{white}{\textbf{agent}}          & \textcolor{white}{\textbf{description}}            & \textcolor{white}{\textbf{hyperparameters}} \\[0.5ex]  \hline
\textbf{\texttt{mlp}}            & Vanilla MLP        &  $L_2$ decay                        \\
\textbf{\texttt{ensemble}}       & `Deep Ensemble' \citep{lakshminarayanan2017simple}          & $L_2$ decay, ensemble size                        \\
\textbf{\texttt{dropout}}    & Dropout \citep{Gal2016Dropout}             &          $L_2$ decay, network, dropout rate                \\
\textbf{\texttt{bbb}}            & Bayes by Backprop  \citep{blundell2015weight}        &    prior mixture, network, early stopping                \\
\textbf{\texttt{hypermodel}}     & Hypermodel \citep{Dwaracherla2020Hypermodels} &                    $L_2$ decay, prior, bootstrap, index dimension \\
\textbf{\texttt{ensemble+}} & Ensemble + prior functions  \citep{osband2018rpf}  &      $L_2$ decay, ensemble size, prior scale, bootstrap                    \\
\textbf{\texttt{sgmcmc}}         & Stochastic Langevin  MCMC \citep{welling2011bayesian}  &               learning rate, prior, momentum           \\ \hline
\end{tabular}
}
\end{center}
\label{tab:agent_summary}
\end{table*}

\begin{figure*}[!th]
    \centering
    \includegraphics[width=0.99\textwidth]{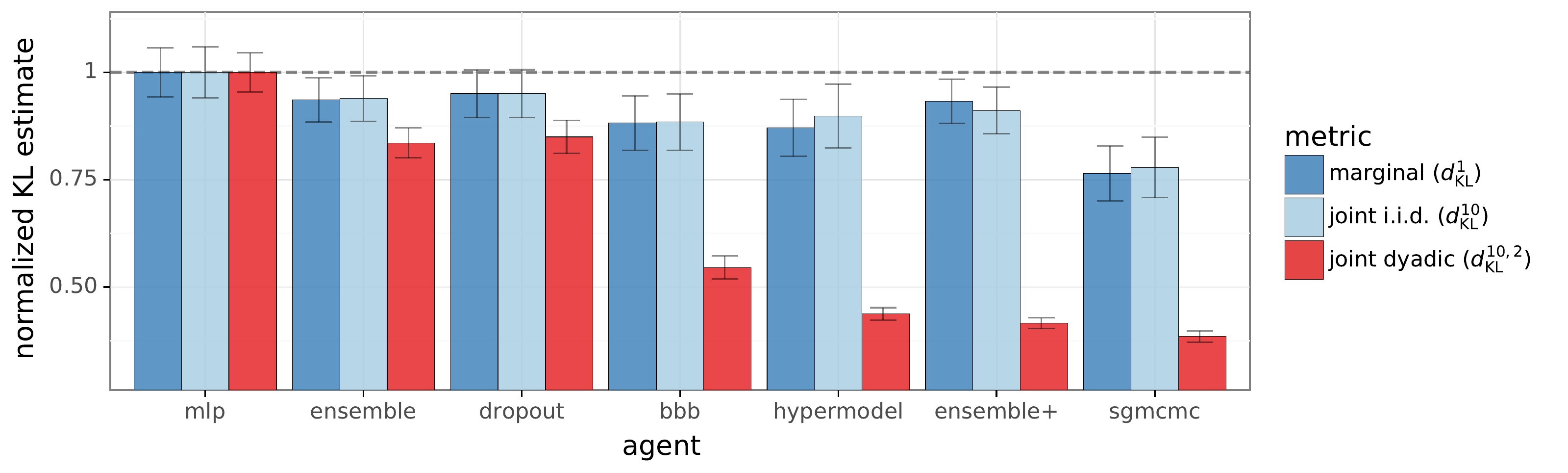}
    \vspace{-4mm}
    \caption{Comparing different agents on the testbed problems with input dimension $D=100$.
    We see that the results for marginal $\KL^1$ and joint $\KL^{10}$ with i.i.d. test sampling do not show any significant difference in performance.
    By contrast, dyadic sampling $\KL^{10, 2}$ clearly separates agent performance in joint versus marginals.}
    \label{fig:testbed_global_kappa_100D}
\end{figure*}


These results clearly demonstrate that the theoretical concerns raised in Section~\ref{sec:theory} actually occur in practical problems.
Further, these concerns can occur even in the most simple settings of logistic regression, rather than contrived scenarios.
We push the details on the robustness/sensitivity of these results to Appendix~\ref{app:logistic}.



\section{The Neural Testbed}
\label{sec:neural_testbed}



In this section we show that the insights observed in the linear setting of Section~\ref{sec:logistic_regression} extend to nonlinear function approximation and neural networks.
\citet{osband2022neural} introduce the Neural Testbed as a simple synthetic 2D problem to evaluate posterior predictives in deep learning.
We show that, using the exisitng $\KL^\tau$ evaluation, this approach does not scale to higher dimensions.
However, using dyadic sampling we are able to extend these insights to practical scales.
As part of our work we contribute these changes to \github.




\subsection{Problem formulation}
\label{sec:testbed_problem}


The Neural Testbed works with a synthetic data generating process around random 2-layer MLPs \citep{osband2022neural}.
For each random seed, a random neural network is sampled according to standard Xavier initialization \citep{glorot2010understanding}.
Then, random train/test inputs are sampled $X_{1:T+T'} \sim N(0,I)$ and labels assigned randomly according to the probabilities of the generative MLP.
We follow the exact settings in the existing opensource package except for two key changes.

First, we supplement the existing evaluation by $\KL^1, \KL^{10}$ to also evaluate according to $\KL^{10, \kappa=2}$.
Then, we vary the input dimension of the problem (which is fixed at $D=2$ in the original Neural Testbed release).
To account for the different data requirements in higher dimensions we similarly increase the number of training pairs in low, medium, high data regimes to scale with the input dimension.

The full testbed sweep is defined over input dimensions $D \in \{2, 10, 100\}$, 
number of training pairs $T = \lambda D$ for $\lambda \in \{1, 10, 100, 1000\}$,
temperature $\rho \in \{0.01, 0.1, 0.5\}$ with 5 random seeds in each setting.
We push full details, together with opensource implementation, to Appendix \ref{app:neural_testbed}.

    
    




\subsection{Benchmark agents}
\label{sec:testbed_agents}


To compare the performance of benchmark agents we make use of the opensource agents developed by \citet{osband2022neural}.
Table~\ref{tab:agent_summary} lists agents that we study and compare as well as hyperparameters that we tune.
In our experiments, we optimize these hyperparameters via grid search.
The choices start from the defaults released in \github, but extend and tweak some hyperparmeter choices for high dimensional problems.
Further detail on these agents is provided in Appendix \ref{app:testbed_agents}.

\subsection{Overall results}
\label{sec:testbed_results}


Figure \ref{fig:testbed_input_dim} shows the KL estimates for these agents, normalized so that the baseline MLP has a score of 1.
In each case, these agents are tuned for performance on the Neural Testbed for input dimension 100.
We can see that in this setting evaluation in $\KL^{10}$ is statistically indistinguishable from that of marginal predictions.
We also see that, for the most part, the quality of these marginal predictions is not massively improved versus the MLP.
However, unlike the 2D testbed results, we do see that some of these more advanced approaches \textit{can} improve marginal predictions.

However, we see that evaluating agents according to dyadic sampling leads to massive distinctions in their evaluations.
Interestingly, these qualitative results match the $\KL^{10}$ with i.i.d. test sampling ordering in the 2D setting.
\citet{osband2022neural} showed that this order was highly correlated with performance in sequential decision problems, even for high input dimension.
Our results provide a significant new finding;
in high dimensional problems dyadic sampling sampling can provide a more targeted signal for the suitability in downstream tasks.

\subsection{Priors in high dimensions}
\label{sec:testbed_priors}


One of the most clear and interesting pairs of agents to compare is \texttt{ensemble} and \texttt{ensemble+}.
These agents are identical except for the addition of randomized, fixed prior networks.
Prior work has shown that this difference can be crucial in high-dimensional decision problems \citep{osband2018rpf, burda2018rnd}.
Comparison of joint predictions $\KL^{10}$ in 2D problems also showed a signficant difference, but only for very small training sets $T \le 30$.
The question remained, do these randomized priors provide value in large scale supervised learning?

Figure~\ref{fig:testbed_input_dim} shows that, according to $\KL^{10}$ the benefits of \texttt{ensemble+} appear to evaporate for input dimensions $\ge 2$.
However, using dyadic sampling and $\kappa=2$ we can see there are huge differences in the quality of their posterior approximation that extend to high dimensional problems.
Figure~\ref{fig:testbed_data_benefits} shows that, as we increase the dimensionality of the problem, so too we increase the size of the largest training sets where prior functions afford signficant advantages.
Rather than becoming irrelevant in large problems, the importance of good inductive bias actually \textit{increases} with input dimension.

\begin{figure}[!ht]
    \centering
    \includegraphics[width=0.99\columnwidth]{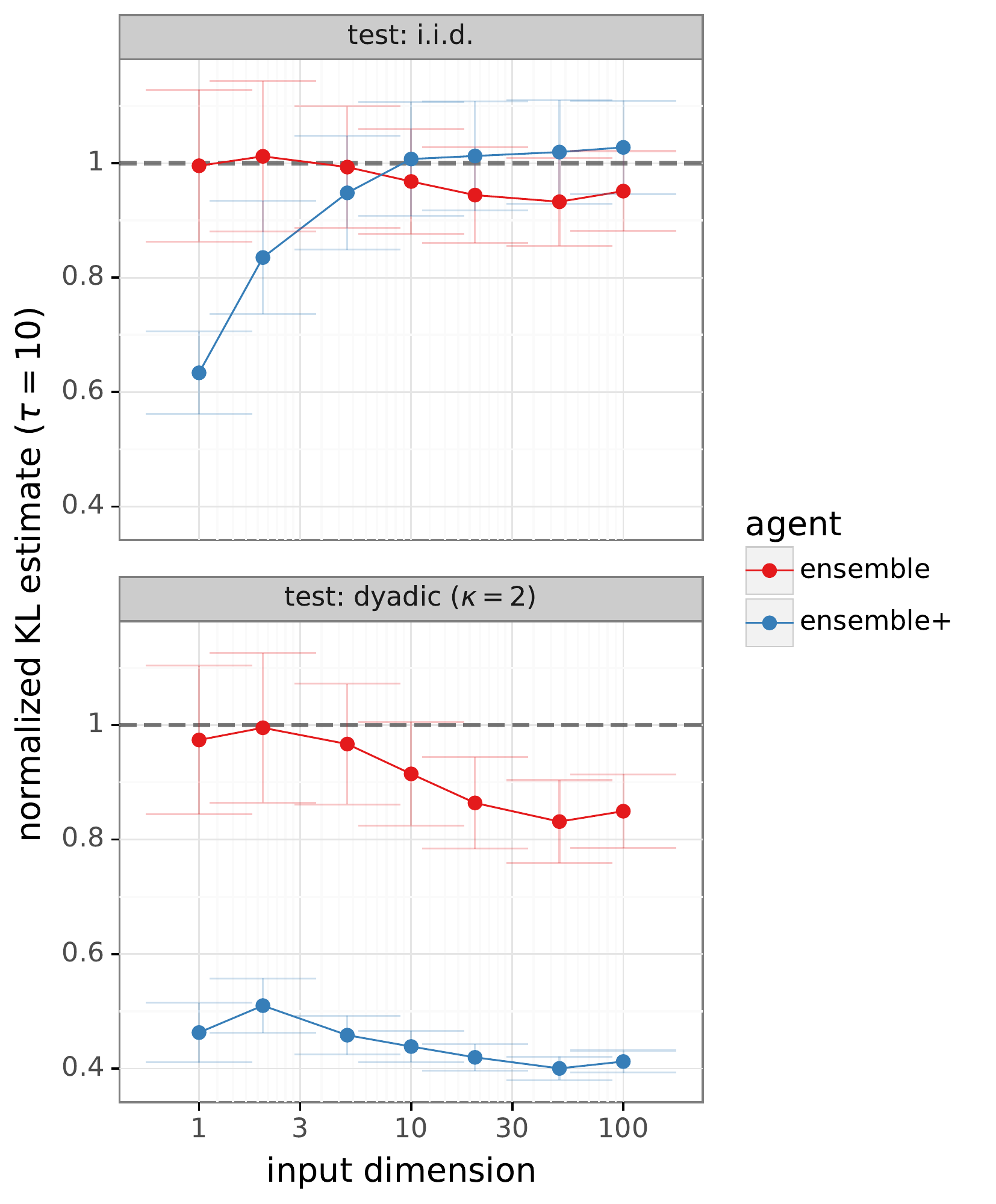}
    \vspace{-3mm}
    \caption{Global input sampling separates \texttt{ensemble} from \texttt{ensemble+} only for low input dimension. Local sampling $\kappa=2$ scales to high dimensions.}
    \label{fig:testbed_input_dim}
\end{figure}

\begin{figure}[!ht]
    \centering
    \includegraphics[width=0.75\columnwidth]{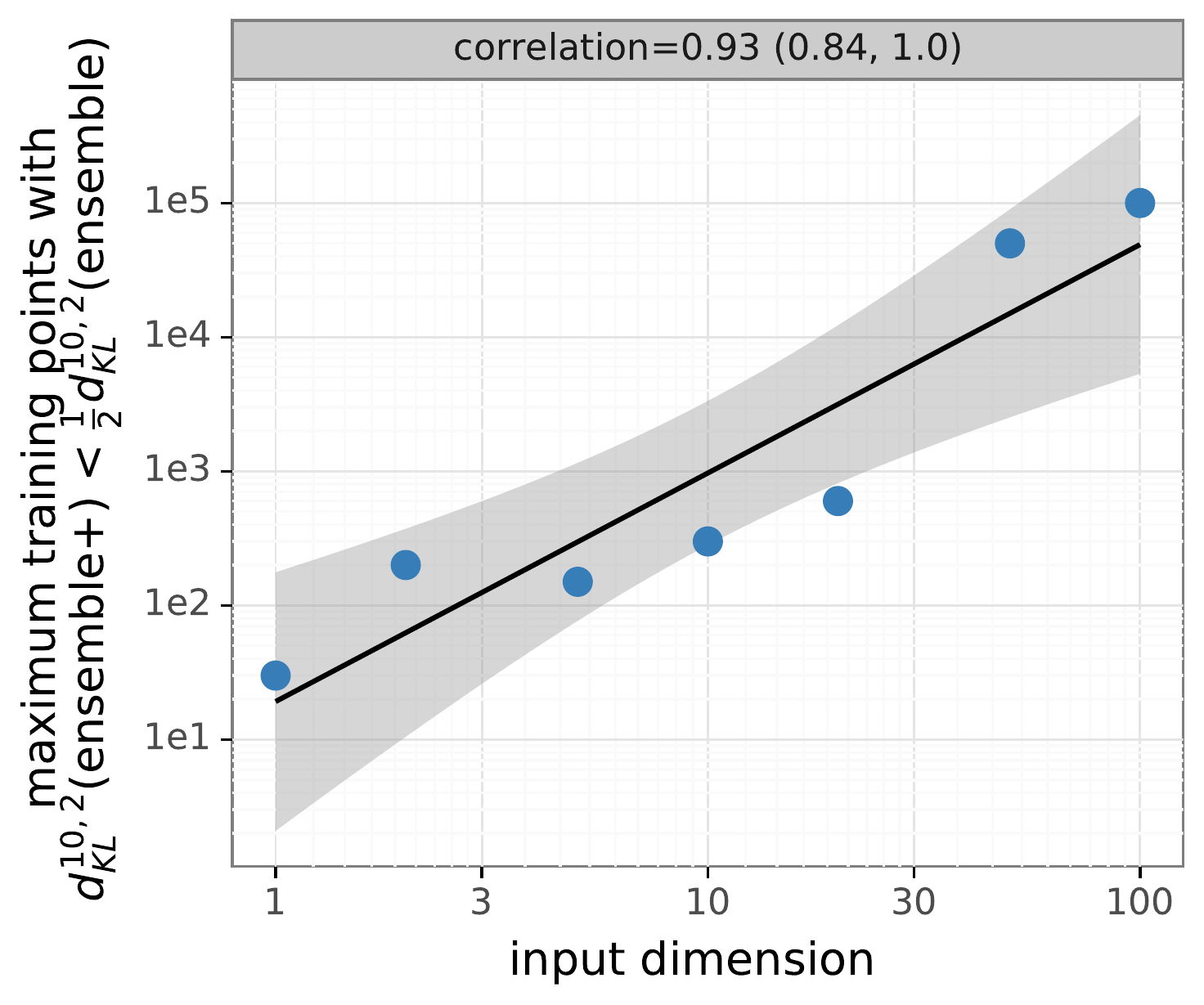}
    \vspace{-3mm}
    \caption{The benefits of \texttt{ensemble+} over \texttt{ensemble} occur in the `low data regime'. However, the amount of data that constitutes as `low data' grows with input dimension.}
    \label{fig:testbed_data_benefits}
\end{figure}


\section{Real data}
\label{sec:real_data}

In this section we show that the key insights gained from the synthetic neural testbed can carry over to real datasets.
We replace the neural network generative process of Section~\ref{sec:neural_testbed} with small challenge datasets drawn from the deep learning literature.
We then tune the agents of Table~\ref{tab:agent_summary} for each of these settings and analyse the results.
We find that all agents can be tuned to perform roughly equivalently in terms of marginal predictions.
However, their performance difference greatly in terms of their joint performance as measured by dyadic sampling.
Further, agent performance on the testbed is highly correlated with performance on real datasets.

\bgroup
\begin{table*}[!t]
\caption{Summary of benchmark datasets studied, full details in Appendix~\ref{app:real_data}.}
\vspace{-4mm}
\begin{center}
{\centering
\footnotesize
 \begin{tabular}{| l | c c c c|} 
 \hline
  \rowcolor{tableHeader}
 \textcolor{white}{\textbf{dataset name}} & \textcolor{white}{\textbf{type}} & \textcolor{white}{\textbf{\# classes}} & \textcolor{white}{\textbf{input dimension}} & \textcolor{white}{\textbf{\# training pairs}} \\ [0.5ex] 
 \hline
iris & structured & 3 & 4 & 120 \\
wine quality & structured & 11& 11 & 3,918 \\
german credit numeric & structured & 2& 24 & 800 \\
mnist & image & 10 & 784 & 60,000 \\
fashion-mnist &image & 10 & 784 & 60,000 \\
mnist-corrupted/shot-noise & image & 10 & 784 & 60,000 \\
emnist/letters & image & 37 & 784 & 88,800 \\
emnist/digits & image & 10 & 784 & 240,000 \\
cmaterdb & image & 10 & 3,072 & 5,000 \\
cifar10 & image & 10 & 3,072 & 50,000 \\
 \hline
 \end{tabular}
\label{tab:dataset_summary}
}
\end{center}
\end{table*}
\vspace{-3mm}
\egroup

\begin{figure*}
\centering

\begin{minipage}[b]{.49\textwidth}
\vspace{-2mm}
\includegraphics[width=0.99\columnwidth]{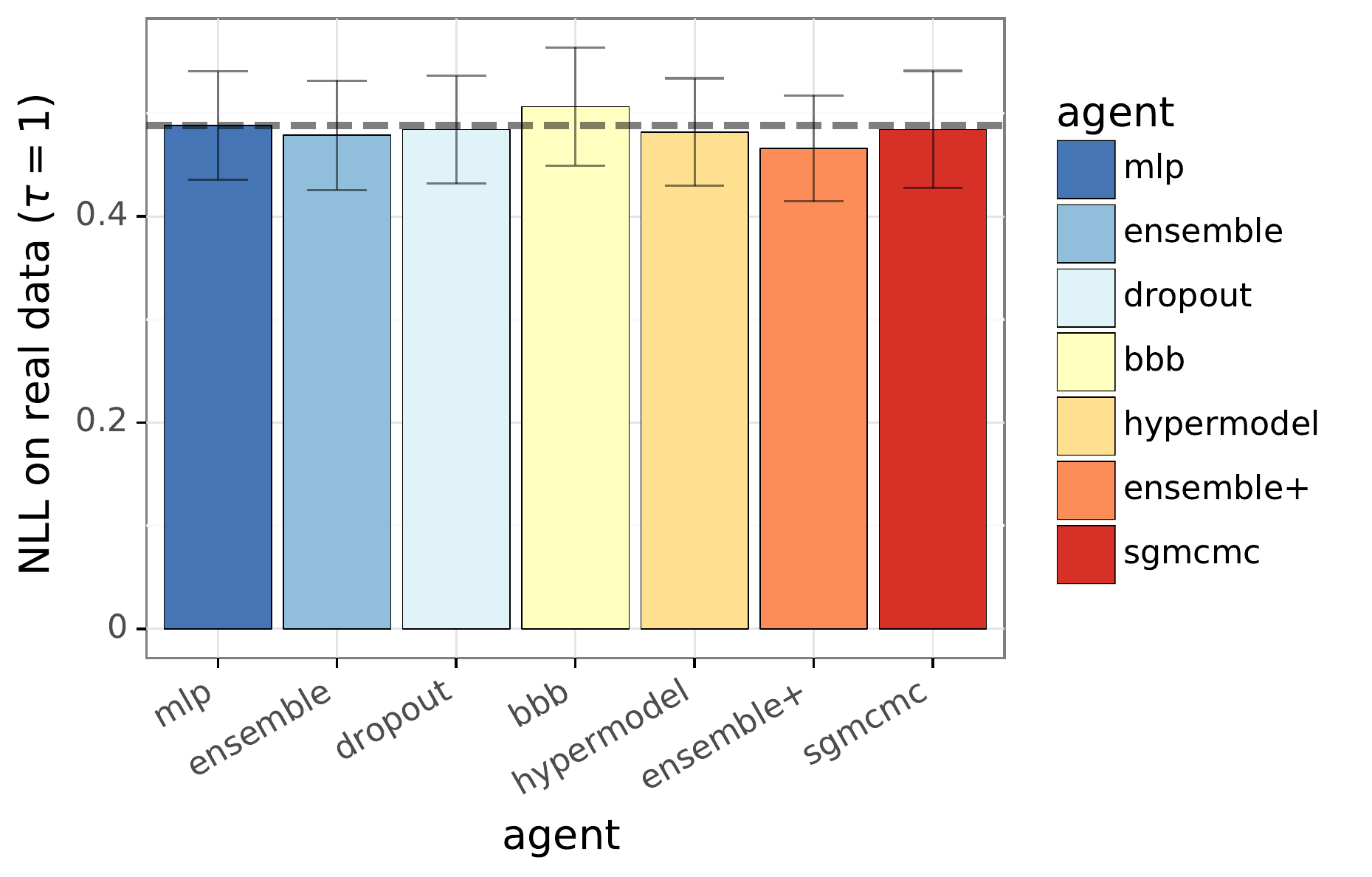}
\vspace{-2mm}
\caption{None of the agents perform significantly better than MLP baseline in marginal likelihood.}
\label{fig:tau_1_real}
\end{minipage}
\hfill
\begin{minipage}[b]{.49\textwidth}
\vspace{-2mm}
\includegraphics[width=0.99\columnwidth]{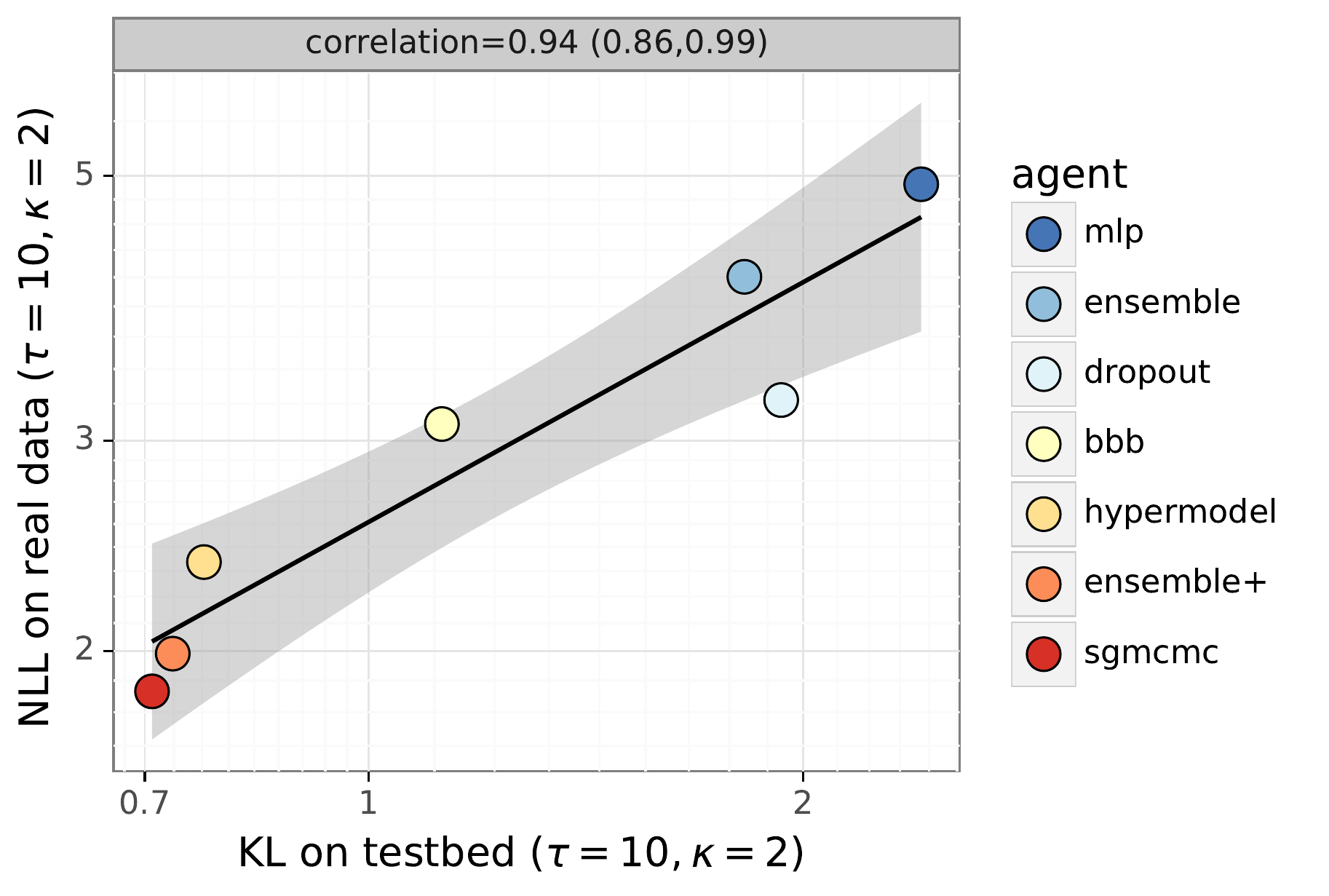}
\vspace{-2mm}
\caption{The quality of joint predictions on the testbed is highly correlated with performance in real data.}
\label{fig:tau_10_real}
\end{minipage}
\end{figure*}

\subsection{Problem formulation}
\label{sec:real_problem}

Progress in the field of deep learning has been driven in large part through evaluation on shared, fixed datasets \citep{krizhevsky2012imagenet}.
We repeat the analysis of Section~\ref{sec:neural_testbed} but replace the synthetic data generating process with a collection of datasets drawn from the literature \citep{TFDS}.

Table~\ref{tab:dataset_summary} outlines the ten datasets we include in our analysis.
We wanted to choose datasets that might provide an analagous challenge to the Neural Testbed and so selected them based on their popularity in the literature, and suitability for training with a 2-layer MLP.
For this reason, large scale challenges such as ImageNet or language modelling, which typically require different classes of models were not included in our selection \citep{deng2009imagenet}.

To mirror our evaluation in the Neural Testbed we begin with datasets $D^n_{T_n} = ((X_t, Y_{t+1}): t=0,..,T_n-1)$ for $n=1,..,10$.
To evaluate different data regimes we create subsampled datasets $\tilde{D}^n_T$ for $T=10, 100, 1000, T_n$ to evaluate different data regimes.
We then evaluate $\KL^{\tau,\kappa}$ in the `low temperature' limit, taking the labels in the supplied test set as probability 1 or, equivalently, the negative log-likelihood \citep{wen2022predictions}.

As in Section~\ref{sec:testbed_agents}, we evaluate the agents outlined in Table~\ref{tab:agent_summary} across each of these datasets in each data regime.
We then tune the hyperparameters per dataset, per data regime and aggregate the performance by taking the average over all evaluations.
This mirrors the procedure that we applied in Section~\ref{sec:neural_testbed}.
We push full details to Appendix~\ref{app:real_data}.

\subsection{Results}
\label{sec:real_results}

We begin by assessing the quality of the agents' performance in marginal predictions, when averaged over all datasets, for all data regimes.
Figure~\ref{fig:tau_1_real} shows that, once agents are optimized for each setting, the differences between agents is not statistically significant.
This finding mirrors our observation in the case of synthetic data and Figure~\ref{fig:testbed_global_kappa_100D}.
These agents perform similarly at marginal prediction in the testbed, and overall they perform similarly in the real datasets as well.


Once you consider the quality of \textit{joint} predictions however, there is a significant difference in the quality of predictive distributions evaluated on real data.
Further, Figure~\ref{fig:tau_10_real} shows that this difference is highly correlated with performance on the Neural Testbed.
Agents that perform better in the setting with synthetic data also tend to perform better when evaluated on real data.
This finding is particularly significant since the differences in $\KL^{10,2}$ are quite large even for these state of the art agents.
These results provide strong indications that the issues observed in sequential decision problems \citep{osband2015bootstrapped} and synthetic data \citep{osband2022neural} can extend to real data.


Now, in some sense the results we have presented are `non-standard' in that our evaluation includes averages over restricted-data versions of the canonical datasets in Table~\ref{tab:dataset_summary}.
We believe that this is a sensible approach if you are interested in designing learning agents that work in online decision making and are robust to different data regimes.
However, in some supervised learning settings it is more common from practitioners to care only about the `full' datasets with $T=T_n$.
In fact, the findings of Figure~\ref{fig:tau_1_real} and Figure~\ref{fig:tau_10_real} are essentially unchanged when restricting only to the `full data' regime.
That is, the differences in marginal predictions $\tau=1$ are quite minor, but the differences in $\tau=10, \kappa=2$ are extreme.
Further, that these differences in joint performance are highly correlated with agent performance in the testbed.
We push full details to Appendix~\ref{app:real_data}.

\vspace{-1mm}
\section{Conclusion}
\label{sec:conclusion}
\vspace{-2mm}




Good predictions are essential for good decisions.
Crucially, the quality of these decisions depends on the quality of \textit{joint} predictions and not just the marginals \citep{wen2022predictions}.
In this paper, we highlight the difficulties in evaluating high-order predictive distributions that are essential for decision making.
We introduce dyadic sampling as an practical heuristic to sidestep the curse of dimensionality.

We motivate dyadic sampling through a simple discrete example, and show that the key insights extend to linear and then nonlinear systems.
We show that the Neural Testbed cannot effectively scale to high dimensions with i.i.d. sampling, but that it can with dyadic sampling.
Importantly, this approach also scales to challenge datasets, and we show that testbed performance is highly correlated with real data.

A major contribution of our work is the opensource effort at \github.
This includes all the code used to generate the paper, and helps to provide clear and reproducible benchmarks for the community.
We believe that this paper can provide an stimulating base for future research into agents that make predictions in high-dimensional problems, and drive effective AI systems.



\newpage
\bibliography{references}

\begin{thebibliography}{23}
\providecommand{\natexlab}[1]{#1}
\providecommand{\url}[1]{\texttt{#1}}
\expandafter\ifx\csname urlstyle\endcsname\relax
  \providecommand{\doi}[1]{doi: #1}\else
  \providecommand{\doi}{doi: \begingroup \urlstyle{rm}\Url}\fi

\bibitem[TFD()]{TFDS}
{TensorFlow Datasets}, a collection of ready-to-use datasets.
\newblock \url{https://www.tensorflow.org/datasets}.

\bibitem[Blundell et~al.(2015)Blundell, Cornebise, Kavukcuoglu, and
  Wierstra]{blundell2015weight}
Charles Blundell, Julien Cornebise, Koray Kavukcuoglu, and Daan Wierstra.
\newblock Weight uncertainty in neural network.
\newblock In \emph{International Conference on Machine Learning}, pages
  1613--1622. PMLR, 2015.

\bibitem[Burda et~al.(2019)Burda, Edwards, Storkey, and Klimov]{burda2018rnd}
Yuri Burda, Harrison Edwards, Amos Storkey, and Oleg Klimov.
\newblock Exploration by random network distillation.
\newblock In \emph{International Conference on Learning Representations}, 2019.
\newblock URL \url{https://openreview.net/forum?id=H1lJJnR5Ym}.

\bibitem[Cover(1999)]{cover1999elements}
Thomas~M Cover.
\newblock \emph{Elements of information theory}.
\newblock John Wiley \& Sons, 1999.

\bibitem[Deng et~al.(2009)Deng, Dong, Socher, Li, Li, and
  Fei-Fei]{deng2009imagenet}
Jia Deng, Wei Dong, Richard Socher, Li-Jia Li, Kai Li, and Li~Fei-Fei.
\newblock Imagenet: A large-scale hierarchical image database.
\newblock In \emph{2009 IEEE conference on computer vision and pattern
  recognition}, pages 248--255. Ieee, 2009.

\bibitem[Dwaracherla et~al.(2020)Dwaracherla, Lu, Ibrahimi, Osband, Wen, and
  Roy]{Dwaracherla2020Hypermodels}
Vikranth Dwaracherla, Xiuyuan Lu, Morteza Ibrahimi, Ian Osband, Zheng Wen, and
  Benjamin~Van Roy.
\newblock Hypermodels for exploration.
\newblock In \emph{International Conference on Learning Representations}, 2020.
\newblock URL \url{https://openreview.net/forum?id=ryx6WgStPB}.

\bibitem[Gal and Ghahramani(2016)]{Gal2016Dropout}
Yarin Gal and Zoubin Ghahramani.
\newblock Dropout as a {B}ayesian approximation: Representing model uncertainty
  in deep learning.
\newblock In \emph{International Conference on Machine Learning}, 2016.

\bibitem[Glorot and Bengio(2010)]{glorot2010understanding}
Xavier Glorot and Yoshua Bengio.
\newblock Understanding the difficulty of training deep feedforward neural
  networks.
\newblock In \emph{Proceedings of the 13th international conference on
  artificial intelligence and statistics}, pages 249--256, 2010.

\bibitem[Hendrycks and Dietterich(2019)]{hendrycks2019benchmarking}
Dan Hendrycks and Thomas Dietterich.
\newblock Benchmarking neural network robustness to common corruptions and
  perturbations, 2019.

\bibitem[Krizhevsky et~al.(2012)Krizhevsky, Sutskever, and
  Hinton]{krizhevsky2012imagenet}
Alex Krizhevsky, Ilya Sutskever, and Geoffrey~E Hinton.
\newblock Imagenet classification with deep convolutional neural networks.
\newblock In \emph{Advances in Neural Information Processing Systems 25}, pages
  1097--1105, 2012.

\bibitem[Lakshminarayanan et~al.(2017)Lakshminarayanan, Pritzel, and
  Blundell]{lakshminarayanan2017simple}
Balaji Lakshminarayanan, Alexander Pritzel, and Charles Blundell.
\newblock Simple and scalable predictive uncertainty estimation using deep
  ensembles.
\newblock In \emph{Advances in Neural Information Processing Systems}, pages
  6405--6416, 2017.

\bibitem[MacKay(1992)]{mackay1992practical}
David~JC MacKay.
\newblock A practical {B}ayesian framework for backpropagation networks.
\newblock \emph{Neural computation}, 4\penalty0 (3):\penalty0 448--472, 1992.

\bibitem[Nado et~al.(2021)Nado, Band, Collier, Djolonga, Dusenberry, Farquhar,
  Filos, Havasi, Jenatton, Jerfel, Liu, Mariet, Nixon, Padhy, Ren, Rudner, Wen,
  Wenzel, Murphy, Sculley, Lakshminarayanan, Snoek, Gal, and
  Tran]{nado2021uncertainty}
Zachary Nado, Neil Band, Mark Collier, Josip Djolonga, Michael Dusenberry,
  Sebastian Farquhar, Angelos Filos, Marton Havasi, Rodolphe Jenatton, Ghassen
  Jerfel, Jeremiah Liu, Zelda Mariet, Jeremy Nixon, Shreyas Padhy, Jie Ren, Tim
  Rudner, Yeming Wen, Florian Wenzel, Kevin Murphy, D.~Sculley, Balaji
  Lakshminarayanan, Jasper Snoek, Yarin Gal, and Dustin Tran.
\newblock {Uncertainty Baselines}: Benchmarks for uncertainty \& robustness in
  deep learning.
\newblock \emph{arXiv preprint arXiv:2106.04015}, 2021.

\bibitem[Neal(2012)]{neal2012bayesian}
Radford~M Neal.
\newblock \emph{Bayesian learning for neural networks}, volume 118.
\newblock Springer Science \& Business Media, 2012.

\bibitem[Osband and Van~Roy(2015)]{osband2015bootstrapped}
Ian Osband and Benjamin Van~Roy.
\newblock Bootstrapped {T}hompson sampling and deep exploration.
\newblock \emph{arXiv preprint arXiv:1507.00300}, 2015.

\bibitem[Osband et~al.(2018)Osband, Aslanides, and Cassirer]{osband2018rpf}
Ian Osband, John Aslanides, and Albin Cassirer.
\newblock Randomized prior functions for deep reinforcement learning.
\newblock In S.~Bengio, H.~Wallach, H.~Larochelle, K.~Grauman, N.~Cesa-Bianchi,
  and R.~Garnett, editors, \emph{Advances in Neural Information Processing
  Systems 31}, pages 8617--8629. Curran Associates, Inc., 2018.
\newblock URL \url{https://bit.ly/rpf_neurips}.

\bibitem[Osband et~al.(2021)Osband, Wen, Asghari, Ibrahimi, Lu, and
  Van~Roy]{osband2021epistemic}
Ian Osband, Zheng Wen, Mohammad Asghari, Morteza Ibrahimi, Xiyuan Lu, and
  Benjamin Van~Roy.
\newblock Epistemic neural networks.
\newblock \emph{arXiv preprint arXiv:2107.08924}, 2021.

\bibitem[Osband et~al.(2022)Osband, Wen, Asghari, Dwaracherla, Hao, Ibrahimi,
  Lawson, Lu, O'Donoghue, and Roy]{osband2022neural}
Ian Osband, Zheng Wen, Seyed~Mohammad Asghari, Vikranth Dwaracherla, Botao Hao,
  Morteza Ibrahimi, Dieterich Lawson, Xiuyuan Lu, Brendan O'Donoghue, and
  Benjamin~Van Roy.
\newblock The neural testbed: Evaluating predictive distributions, 2022.

\bibitem[Wang et~al.(2021)Wang, Sun, and Grosse]{wang2021beyond}
Chaoqi Wang, Shengyang Sun, and Roger Grosse.
\newblock Beyond marginal uncertainty: How accurately can {B}ayesian regression
  models estimate posterior predictive correlations?
\newblock In \emph{International Conference on Artificial Intelligence and
  Statistics}, pages 2476--2484. PMLR, 2021.

\bibitem[Welling and Teh(2011)]{welling2011bayesian}
Max Welling and Yee~W Teh.
\newblock Bayesian learning via stochastic gradient {L}angevin dynamics.
\newblock In \emph{Proceedings of the 28th international conference on machine
  learning (ICML-11)}, pages 681--688. Citeseer, 2011.

\bibitem[Wen et~al.(2022)Wen, Osband, Qin, Lu, Ibrahimi, Dwaracherla, Asghari,
  and Roy]{wen2022predictions}
Zheng Wen, Ian Osband, Chao Qin, Xiuyuan Lu, Morteza Ibrahimi, Vikranth
  Dwaracherla, Mohammad Asghari, and Benjamin~Van Roy.
\newblock From predictions to decisions: The importance of joint predictive
  distributions, 2022.

\bibitem[Wilson and Izmailov(2020)]{wilson2020bayesian}
Andrew~Gordon Wilson and Pavel Izmailov.
\newblock Bayesian deep learning and a probabilistic perspective of
  generalization.
\newblock \emph{arXiv preprint arXiv:2002.08791}, 2020.

\bibitem[Wilson et~al.(2021)Wilson, Izmailov, Hoffman, Gal, Li, Pradier,
  Vikram, Foong, Lotfi, and Farquhar]{wilson2021evaluating}
Andrew~Gordon Wilson, Pavel Izmailov, Matthew~D Hoffman, Yarin Gal, Yingzhen
  Li, Melanie~F Pradier, Sharad Vikram, Andrew Foong, Sanae Lotfi, and
  Sebastian Farquhar.
\newblock Evaluating approximate inference in bayesian deep learning.
\newblock 2021.

\end{thebibliography}

\newpage
\appendix
\onecolumn


\section{Evaluating predictive distributions}
\label{app:theory}

This section contains supplementary material for Section~\ref{sec:theory}.
Importantly, we provide the proof for Proposition~\ref{prop:small_tau}and discuss why dyadic sampling is sufficient for Gaussian process.

\subsection{Proof for Proposition~\ref{prop:small_tau}}
\label{app:proof_small_tau}

\setcounter{proposition}{0}
\smalltau

\begin{proof}
Note that by definition, $\KL^\tau \geq \KLBAR^\tau$. We now prove that $\KL^\tau \leq \KLBAR^\tau +  O\left( \tau^3/M \right)$. Note that
\begin{equation}
    \KL^\tau = \E \left[ \log \left (\Prob (Y_{1:\tau} | \environment, X_{0:\tau-1}) \right)  \right] - \tau \log \left( \frac{1}{2} \right), \nonumber
\end{equation}
where $\tau \log \left( \frac{1}{2} \right)$ is the log-likelihood under the uniform agent, and
\begin{equation}
\KLBAR^\tau = \E \left[ \log \left (\Prob (Y_{1:\tau} | \environment, X_{0:\tau-1}) \right)  \right] -
\E \left[ \log \left (\Prob (Y_{1:\tau} |  X_{0:\tau-1}) \right)  \right].
\nonumber
\end{equation}
Consequently, we have
\begin{equation}
    \KL^\tau - \KLBAR^\tau  = \tau \log(2) +
    \E \left[ \log \left (\Prob (Y_{1:\tau} | X_{0:\tau-1}) \right) \right].  \nonumber
\end{equation}
We define the event $\mathcal{G}$ as 
\[
\mathcal{G} = \{\textrm{there are no repeated inputs in $X_{0:\tau-1}$} \}.
\]
One key observation is that conditioning on $\mathcal{G}$, the posterior predictive distribution is i.i.d. across inputs, and
\[
\log \left (\Prob (Y_{1:\tau} | X_{0:\tau-1}) \right) = - \tau \log(2)
\]
conditioning on $\mathcal{G}$. Hence 
\begin{eqnarray}
    \E \left[ \log \left (\Prob (Y_{1:\tau} | X_{0:\tau-1}) \right) \right]
    &=& - \Prob(\mathcal{G}) \tau \log(2)
    + \Prob(\bar{\mathcal{G}}) \E \left[ \log \left (\Prob (Y_{1:\tau} | X_{0:\tau-1}) \right) \middle | \bar{\mathcal{G}} \right] \nonumber \\
    &\leq& - \Prob(\mathcal{G}) \tau \log(2) \nonumber
\end{eqnarray}
where $\bar{\mathcal{G}}$ is the complement of $\mathcal{G}$, and the inequality follows from 
$\log \left (\Prob (Y_{1:\tau} | X_{0:\tau-1}) \right) \leq 0$. Hence we have
\[
\KL^\tau - \KLBAR^\tau \leq (1 - \Prob(\mathcal{G})) \tau \log (2) = \Prob (\bar{\mathcal{G}}) \tau \log (2).
\]
Finally, note that 
\begin{align}
\Prob(\mathcal{G}) = \, \prod_{k=1}^{\tau-1} \left ( 1 - \frac{k}{M} \right) 
= \, 1 - \frac{1}{M} \sum_{k=1}^{\tau-1} k + O \left( \frac{1}{M^2} \right ) 
= \, 1 - \frac{\tau (\tau-1)}{2M} + O \left( \frac{1}{M^2} \right ) . \nonumber
\end{align}
Hence $\Prob(\bar{\mathcal{G}}) = O \left(\tau^2/M \right)$ and we have
\[
\KL^\tau - \KLBAR^\tau \leq O \left(\tau^3/M \right).
\]
The conclusion follows from
\[
\KLBAR^\tau \leq \KL^\tau \leq \KLBAR^\tau + O \left(\tau^3/M \right).
\]
\end{proof}

\subsection{Dyadic sampling and Gaussian processes}
\label{app:gaussian_process}

In this section, we discuss why dyadic sampling is sufficient for Gaussian processes (GPs).
In particular, we show that when both the environment $\environment$ and the imagined environment $\hat{\environment}$ of an agent follow GP, then with sufficiently large $\tau$ and under suitable regularity conditions, performing well under $\KL^{\tau, \kappa=2}$ is sufficient to ensure that the posterior distribution of $\environment$ and the agent's belief over $\hat{\environment}$ are close.

Assume that both $\environment$ and $\hat{\environment}$ are GPs with the same finite domain $\mathcal{X}$ and that the training input distribution is uniform over $\mathcal{X}$. Specifically, under the environment $\environment$,
\[
Y_{t+1} = f(X_t) + W_{t+1},
\]
and under the imagined environment 
$\hat{\environment}$,
\[
\hat{Y}_{t+1} = \hat{f}(X_t) + \hat{W}_{t+1},
\]
where $W_{t+1}$'s and $\hat{W}_{t+1}$'s are i.i.d. observation noises according to $N(0, \sigma^2)$, and 
$f$ and $\hat{f}$ are functions over $\mathcal{X}$. We assume that
$\Prob(f \in \cdot | \data_T) = N(\mu, \Sigma)$ and $\Prob(\hat{f} \in \cdot | \theta_T) = N(\hat{\mu}, \hat{\Sigma})$. Note that by definition
\begin{small}
\begin{align}
\KL^{\tau, \kappa=2} =& \E \left[ \E \left[ \KL \left(P^*_{T+1:T+\tau} \middle \| \hat{P}_{T+1:T+\tau} \right) \middle | X_{T:T+\tau-1} = \tilde{X}_{T:T+\tau-1}^{\kappa=2}\right] \right] \nonumber \\
=& \underbrace{\E \left[ \I \left(\environment ; Y_{T+1:T+\tau} \middle | \data_T, X_{T:T+\tau-1} = \tilde{X}_{T:T+\tau-1}^{\kappa=2} \right) \right]}_{\text{irreducible}} + 
\underbrace{
\E \left[ \E \left[ \KL \left(\overline{P}_{T+1:T+\tau} \middle \| \hat{P}_{T+1:T+\tau} \right) \middle | X_{T:T+\tau-1} = \tilde{X}_{T:T+\tau-1}^{\kappa=2}\right] \right]}_{\KLTILDE^{\tau, \kappa=2}}. \nonumber
\end{align}
\end{small}
Note that the first term in the above equation is irreducible and independent of the agent, hence, performing well under $\KL^{\tau, \kappa=2}$ is equivalent to performing well under $\KLTILDE^{\tau, \kappa=2}$. Under suitable regularity conditions, for sufficiently large $\tau$, we have
\[
 \KLTILDE^{\tau, \kappa=2} \approx \E \left[ 
\KL \left( \Prob \big (f(\tilde{X}_{1:2} ) \in \cdot  | \data_T, \tilde{X}_{1:2}
\big ) \, \middle \| \,
\Prob \big (\hat{f}(\tilde{X}_{1:2} ) \in \cdot  | \theta_T, \tilde{X}_{1:2}
\big )
\right)
\right ],
\]
where $\tilde{X}_{1:2} = \big ( \tilde{X}_1, \tilde{X}_2 \big)$ and $\tilde{X}_1$ and $\tilde{X}_2$ are i.i.d. sampled from $P_X$. Thus, if the RHS of the above equation is small, then it implies that
\begin{equation}
\KL \left( \Prob \big (f(\tilde{X}_{1:2} ) \in \cdot  | \data_T, \tilde{X}_{1:2}
\big ) \, \middle \| \,
\Prob \big (\hat{f}(\tilde{X}_{1:2} ) \in \cdot  | \theta_T, \tilde{X}_{1:2}
\big )
\right)
\label{app:eq:dyadic_kl}
\end{equation}
is small for all $\tilde{X}_{1:2}$. Let $\mu (\tilde{X}_{1:2}) \in \Re^2$ and $ \Sigma (\tilde{X}_{1:2}) \in \Re^{2 \times 2}$ respectively denote $\mu$ and $\Sigma$ restricted to $\tilde{X}_{1:2}$, and $\hat{\mu} (\tilde{X}_{1:2})$ and $ \hat{\Sigma} (\tilde{X}_{1:2}) $ are defined similarly, then we have \[
f(\tilde{X}_{1:2} ) \sim N \big(\mu (\tilde{X}_{1:2}), \Sigma (\tilde{X}_{1:2}) \big) \quad \text{and} \quad \hat{f} (\tilde{X}_{1:2} ) \sim N \big (\hat{\mu} (\tilde{X}_{1:2}), \hat{\Sigma} (\tilde{X}_{1:2}) \big). 
\]
Consequently, if equation~\ref{app:eq:dyadic_kl} is small, then $\mu (\tilde{X}_{1:2})$ is close to $\hat{\mu} (\tilde{X}_{1:2})$ and $\Sigma (\tilde{X}_{1:2})$ is close to $\hat{\Sigma} (\tilde{X}_{1:2})$. Since this holds for all $\tilde{X}_{1:2}$, this further implies that $\mu$ is close to $\hat{\mu}$ and $\Sigma$ is close to $\hat{\Sigma}$. In other words, the posterior distribution of $\environment$ and the agent's belief over $\hat{\environment}$ are close.

\section{Logistic regression}
\label{app:logistic}

This appendix provides supplementary details for Section~\ref{sec:logistic_regression}.
We include all of the code necessary to generate Figures~\ref{fig:logistic_tau_scaling} and \ref{fig:logistic_agents} as part of our opensource submission \github.
Results are averaged over 10 random seeds per problem setting.

Figure~\ref{fig:logistic_ratio_tau} provides another kind of insight to the scaling observed in Figure~\ref{fig:logistic_tau_scaling}.
In these plots we show the KL ratio of a perfect \texttt{prior} agent when compared to \texttt{uniform}.
We can see that, for any input dimension, the empirical KL ratio decreases with $\tau$.
However, as the input dimension grows reasonably large $(D=10)$, that even large $\tau=10,000$ are not enough to observe this ratio under 0.5.
We know that, as $\tau \rightarrow \infty$ this ratio will tend to zero for these two agents.
By contrast, dyadic sampling is able to clearly distinguish these agents even for moderate values of $\tau$.

\begin{figure}[!ht]
    \centering
    \includegraphics[width=0.75\columnwidth]{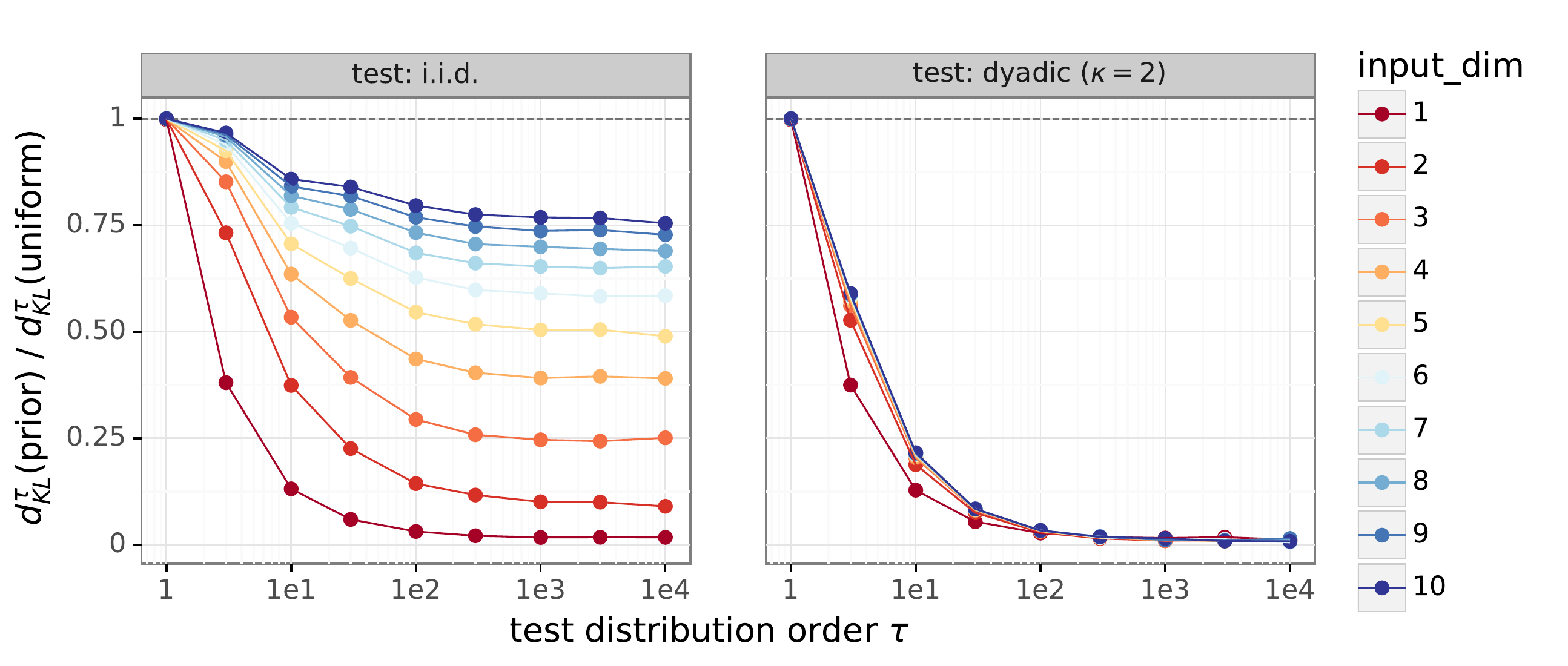}
    \vspace{-3mm}
    \caption{Global input sampling can eventually separate prior samples from uniform, but the required $\tau$ grows exponentially with input dimension. Local $\kappa=2$ sampling can distinguish these agents without exponential $\tau$.}
    \label{fig:logistic_ratio_tau}
\end{figure}

Figure~\ref{fig:logistic_agent_samples} provides some insight to the robustness of Algorithm~\ref{alg:kl-computation} under varying number of agent samples.
We make use of the \textit{epistemic neural network} notation introduced by \citet{osband2021epistemic}.
We can see that these monte carlo estimates converge empirically as we increase the number of samples.
Therefore, for the purposes of our experiments in this section our choice of $10,000$ ENN samples is sufficient.

\begin{figure}[!ht]
    \centering
    \includegraphics[width=0.95\columnwidth]{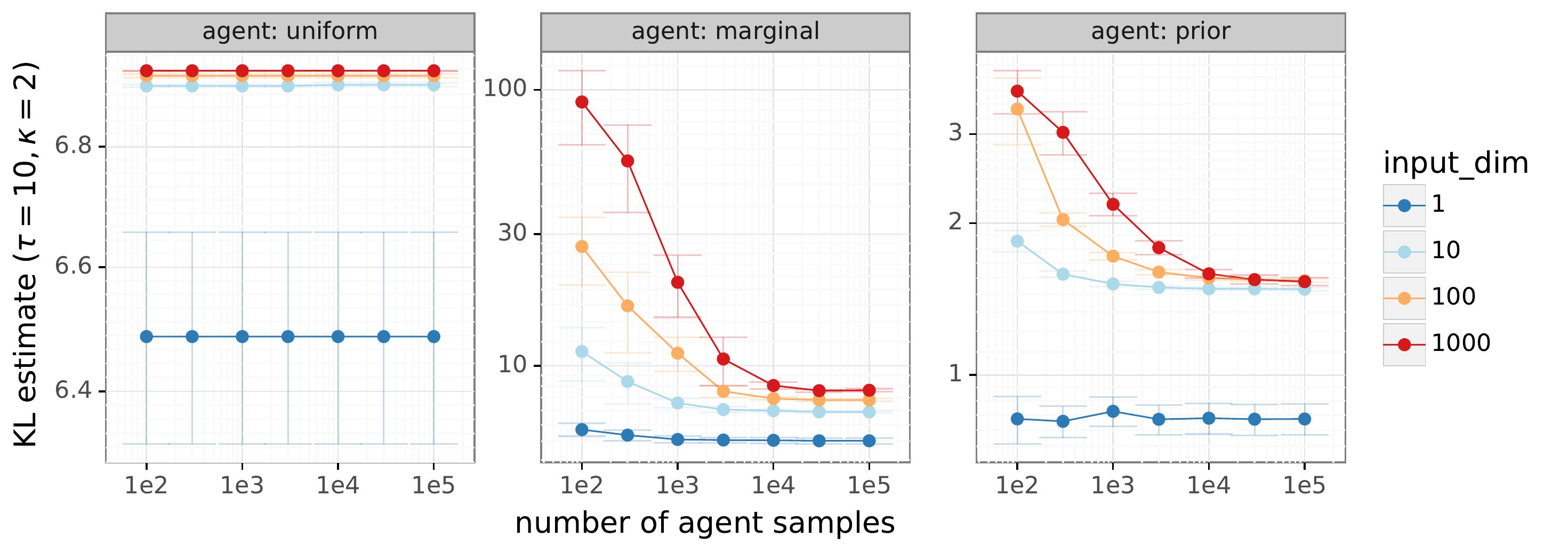}
    \vspace{-3mm}
    \caption{For the agents that we consider, 10,000 ENN samples is sufficient to get reasonable KL estimates across all input dimensions.}
    \label{fig:logistic_agent_samples}
\end{figure}

\section{Neural Testbed}
\label{app:neural_testbed}

This appendix provides supplementary details for Section~\ref{sec:neural_testbed}

\subsection{Problem formulation}
\label{app:testbed_problem}

We build on the opensource code of the Neural Testbed \gitpublic.
Our testbed sweep is defined over input dimensions $D \in \{2, 10, 100\}$, 
number of training pairs $T = \lambda D$ for $\lambda \in \{1, 10, 100, 1000\}$,
temperature $\rho \in \{0.01, 0.1, 0.5\}$ with 5 random seeds in each setting.
We replace the $\KL^{10}$ evaluation with dyadic sampling $\KL^{10, \kappa=2}$.
We release all of our code and implementation at \github.

\subsection{Benchmark agents}
\label{app:testbed_agents}

We make use of the benchmark agents introduced in \citet{osband2022neural} and opensourced at \gitpublic. Since our testbed includes settings with number of training pairs as small as 2 (when $D=2$, $\lambda=1$) and as large as 100,000 (when $D=100$, $\lambda=1000$), in order to improve agent performance over all settings, we allow agents to adjust their number of training steps based on the problem setting. Agents implementation can be found in our open source code under the path \url{/agents/factories}.

We make small alterations to the tuning sweeps proposed in \citet{osband2022neural} in an effort to improve agent performance in high dimension problems.
This change strictly improved the agent performance as we only \textit{added} hyperparameter choices and did not restrict them.
Our sweeps can be found in our open source code under the path \url{/agents/factories/sweeps/testbed}, but we highlight the differences that helped to improve agent performance. For \textbf{\texttt{mlp}}, \textbf{\texttt{ensemble}}, \textbf{\texttt{dropout}}, \textbf{\texttt{bbb}}, \textbf{\texttt{hypermodel}}, \textbf{\texttt{ensemble+}} agents, we found out that their performance improves by allowing them to adjust their default number of training steps based on the problem setting: increase it by 5x when $\lambda=1000$ and decrease it by 5x when $\lambda=1$. For \textbf{\texttt{sgmcmc}} agent, we found out that we can improve the performance of this agent by allowing it to increase prior variance parameter by 2x when $D=100$.

\subsection{Overall results}
\label{app:testbed_overall}

Figure~\ref{fig:testbed_global_kappa_100D} provides an overview of the agent performance on the testbed in terms of $\KL$.
These numbers are normalized so that the baseline MLP has a value of 1.
In classification problems it is common to also consider the classification accuracy, or the percentage of inputs for which the agent correctly labels the input.
Figure~\ref{fig:final_acc} confirms that, after tuning, none of the agents perform significantly differently from baseline MLP.

\begin{figure}[!ht]
    \centering
    \includegraphics[width=0.8\columnwidth]{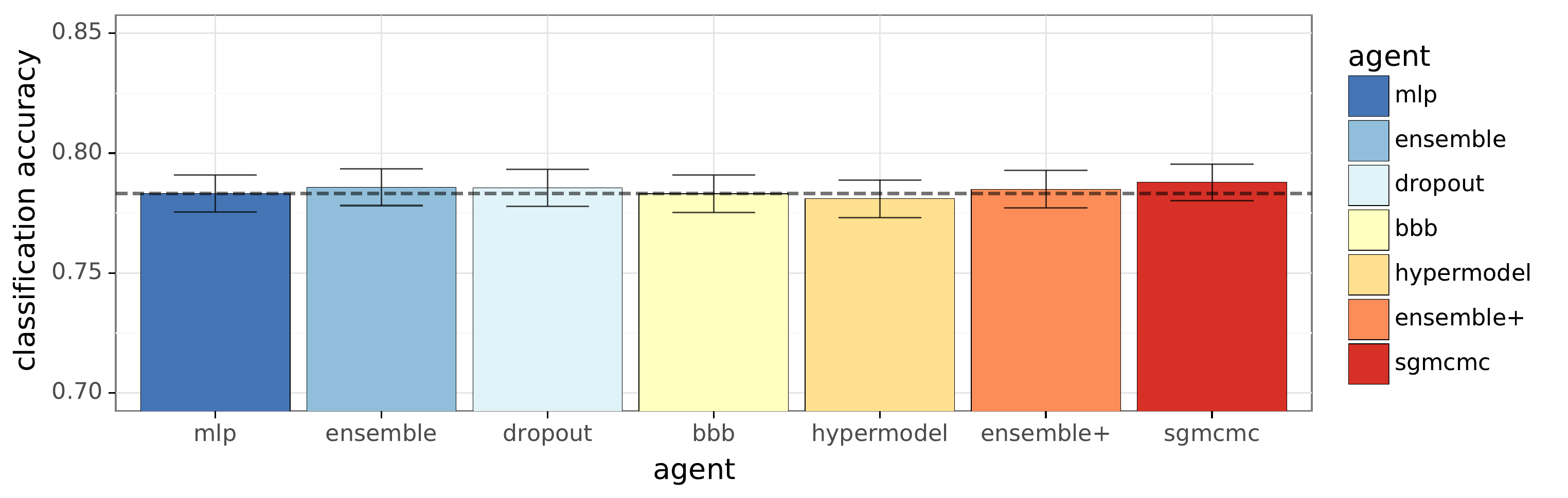}
    \vspace{-3mm}
    \caption{After tuning, none of the agents perform signficantly differently from the baseline MLP in terms of classification accuracy.}
    \label{fig:final_acc}
\end{figure}

\section{Real data}
\label{app:real_data}

This section provides supplementary details regarding the experiments in Section \ref{sec:real_data}.
As before, we include full implementation and source code in our open source code under the path \url{/real_data}.

\subsection{Problem formulation}
\label{app:real_problem}

Table \ref{tab:dataset_summary} outlines the datasets included in our experiments.
For each dataset, we perform a standard preprocessing on inputs to be mean zero and unit variance.
Full details are available  in our open source code under the path \url{/real_data/utils.py}.

In the testbed we are able to evaluate a wide range of SNR regimes by varying temperature.
This means that we can query a given input $X_t$ multiple times and potentially obtain different class labels $Y_t$.
For these fixed dataset there is only one testing dataset, with deterministic labels given for each input.
We map this setting to the low temperature limit (and high SNR) setting of our testbed.
As such, we evaluate the negative log-likelihood in place of $\KL^\tau$.
This is equivalent to assuming the underlying world model was deterministic at these testing points, and is standard practice in deep learning.

We note that this `high SNR' assumption appears to be reasonable in practice, since for all of the datasets considered in Table~\ref{tab:dataset_summary} the benchmark \textbf{\texttt{mlp}} agent is able to obtain high classification accuracy on held out data.
This would not be possible if the underlying system was fundamentally stochastic, due to the irreducible error due to chance.


\subsection{Results}
\label{app:real_testbed}

In this section we provide some supplementary results that analyze the performance of our benchmark agents on real data. To allow for hyperparameter tuning separately on the testbed and real datasets, we included different sweeps for the testbed and real datasets. Our sweeps for real data can be found in our open source code under the path \url{/agents/factories/sweeps/real_data}.

One of the headline results in our paper is Figure~\ref{fig:tau_10_real}, which shows that the quality of joint predictions on the testbed is highly correlated with performance in real data.
Figure~\ref{fig:tau_10_real_high_data} shows that this result is still true when you restrict the evaluation to the `full training data' setting in each dataset.
Further, this aggregate correlation is not driven by just one outlier dataset, but actually occurs in each dataset individually.
In fact, after bootstrapping only the results on Iris were not significant at the 95\% confidence levels.
This gives some additional reassurance that the relationship between joint performance on testbed and real data is robust.

\begin{figure}[!ht]
    \centering
    \includegraphics[width=0.99\columnwidth]{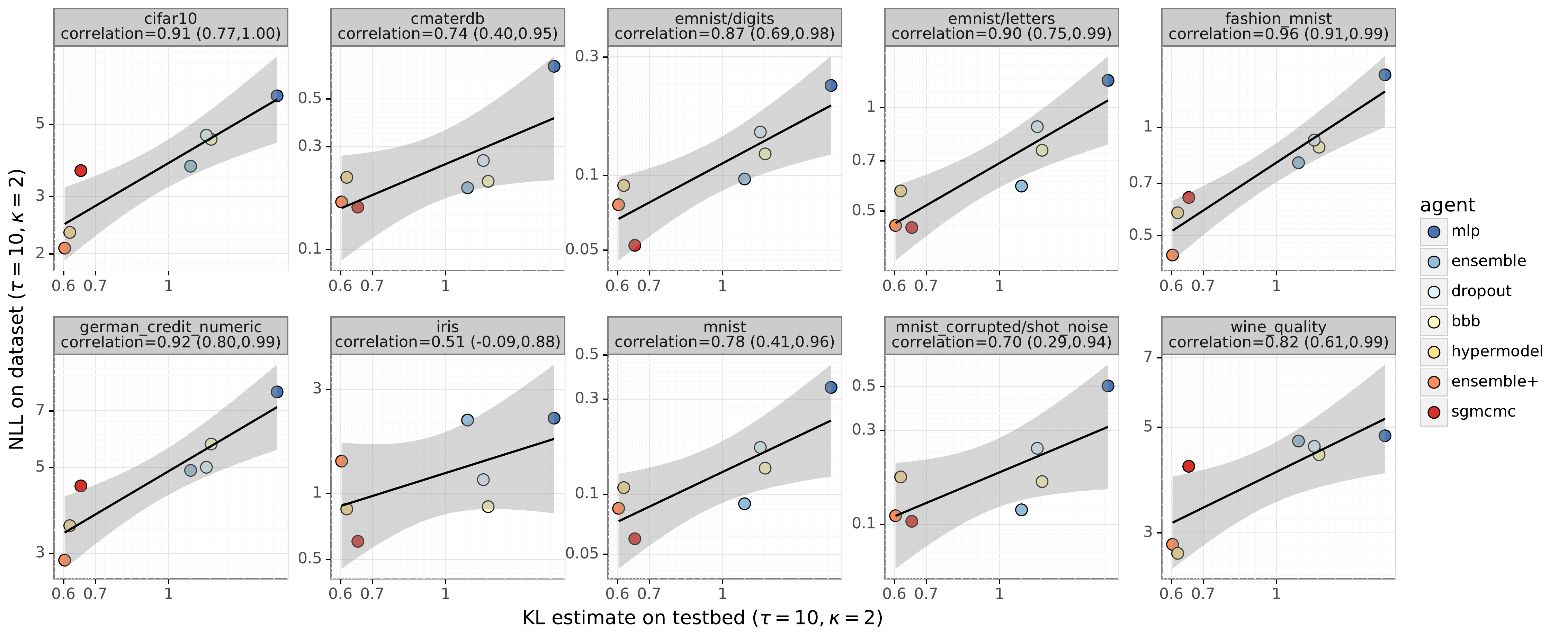}
    \vspace{-3mm}
    \caption{The quality of joint predictions on the testbed is highly correlated with performance in real data.}
    \label{fig:tau_10_real_high_data}
\end{figure}

Our results in this paper allow for hyperparameter tuning separately on the testbed and real datasets.
We believe that this is reasonable practice, and reflects the way machine learning algorithms are usually used in practice.
However, one natural question might be if tuning an agent's performance on the testbed leads to good hyperparameter settings on real data.
Figure~\ref{fig:agents_hypers_corr} shows the results of this analysis across a wide range of agent-hyperparameter pairs.
Agent-hyperparameter pairs that perform better on the testbed generally also perform better on real data.
This result is statistically significant in both $\tau=1$ and $\tau=10$ dyadic sampling.
However, we do see a stronger correlation in joint predictions rather than marginals.
So while we do not necessarily recommend tuning your agent for real datasets using the Neural Testbed, these results say that it will provide a better answer on average than random chance.

\begin{figure}[!ht]
    \centering
    \includegraphics[width=0.85\columnwidth]{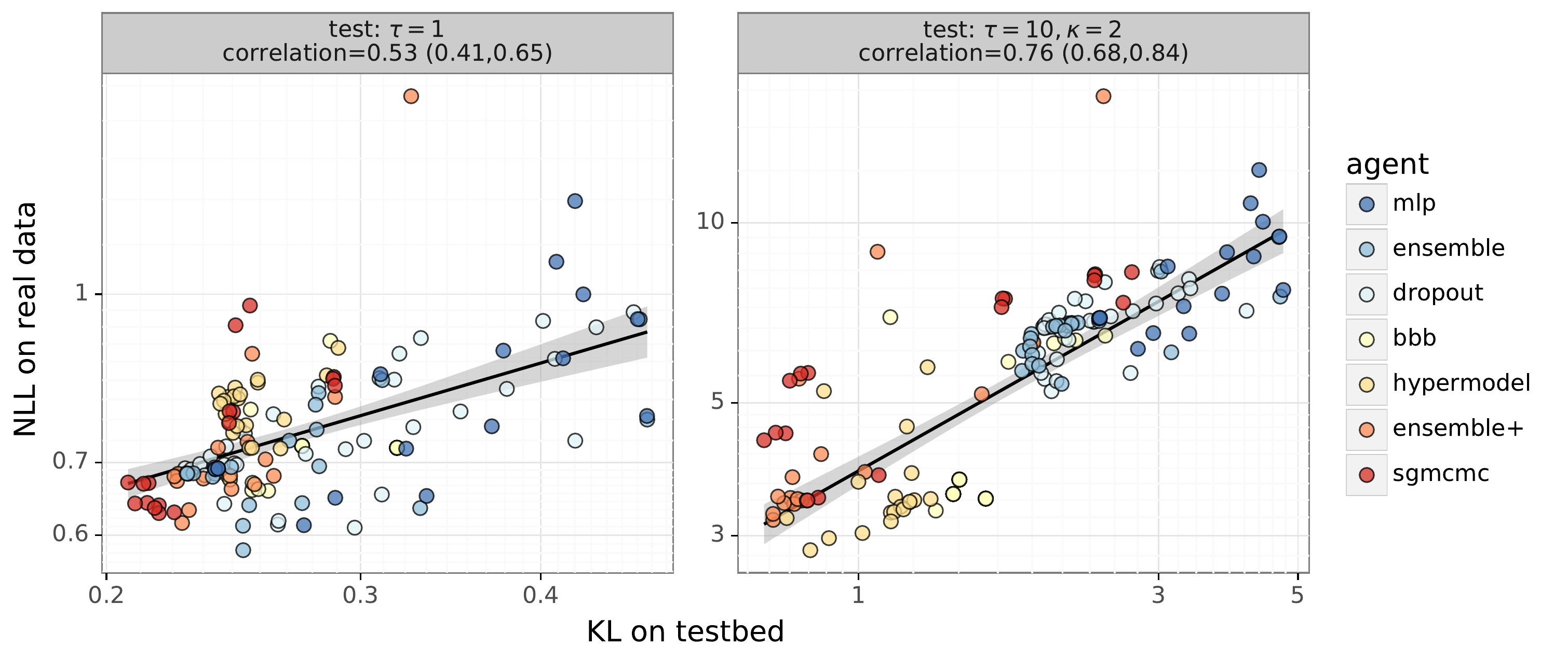}
    \vspace{-3mm}
    \caption{Agent-hyperparameter pairs that perform better on the testbed generally also perform better on real data.}
    \label{fig:agents_hypers_corr}
\end{figure}


\end{document}